\renewcommand\footnotetextcopyrightpermission[1]{} 
\def\@copyrightspace{\relax}
\begin{document}


\title{Combining No-regret and Q-learning}
\titlenote{Part of this work was done while Ian Kash was at Microsoft Research. We gratefully acknowledge support from the National Science Foundation via award CCF-1934915. This represents the full version of the paper from AAMAS 2020.}
\author{Ian A. Kash}
\affiliation{%
 \institution{University of Illinois at Chicago}
 \city{Chicago} 
 \state{IL} 
}
\email{iankash@uic.edu}

\author{Michael Sullins}
\affiliation{%
 \institution{University of Illinois at Chicago}
 \city{Chicago} 
 \state{IL} 
}
\email{sullins2@uic.edu}

\author{Katja Hofmann}
\affiliation{%
 \institution{Microsoft Research}
 \city{Cambridge} 
 \state{UK} 
}
\email{katja.hofmann@microsoft.com}

\keywords{No-regret learning; reinforcement learning; CFR}

\begin{abstract}
Counterfactual Regret Minimization (CFR) has found success in settings like poker which have both terminal states and perfect recall.  We seek to understand how to relax these requirements.  As a first step, we introduce a simple algorithm, local no-regret learning (LONR), which uses a Q-learning-like update rule to allow learning without terminal states or perfect recall.  We prove its convergence for the basic case of MDPs (where Q-learning already suffices), as well as limited extensions of them. With a straightforward modification, we extend the basic premise of LONR to work in multi-agent settings and present empirical results showing that, with appropriate parameterization, it achieves last iterate convergence in a number of settings. Most notably, we show this for NoSDE games, a class of Markov games specifically designed to be impossible for Q-value-based methods to learn and  where no prior algorithm is known to achieve convergence to a stationary equilibrium even on average.

\end{abstract}

\maketitle


\section{Introduction}


Versions of counterfactual regret minimization (CFR) \citep{zinkevich2008regret} have found success in playing poker at human expert level \citep{brown2019superhuman,moravvcik2017deepstack} as well as fully solving non-trivial versions of it~\citep{bowling2015heads}.  CFR more generally can solve extensive form games of incomplete information.  It works by using a no-regret algorithm to select actions.  In particular, one copy of such an algorithm is used at each {\em information set}, which corresponds to the full history of play observed by a single agent.  The resulting algorithm satisfies a global no-regret guarantee, so at least in two-player zero-sum games is guaranteed to converge to an optimal strategy through sufficient self-play.

However, CFR does have limitations.  It makes two strong assumptions which are natural for games such as poker, but limit applicability to further settings.  First, it assumes that the agent has perfect recall, which in a more general context means that the state representation captures the full history of states visited
(and so imposes a tree structure).
Current RL domains may rarely repeat states due to their large state spaces, but they certainly do not encode the full history of states and actions.
Second, it assumes that a terminal state is eventually reached and performs updates only after this occurs. 
Even in episodic RL settings, which do have terminals, it may take thousands of steps to reach them.
Neither of these assumptions is required for traditional planning algorithms like value iteration or reinforcement learning algorithms like Q-learning.  Nevertheless, approaches inspired by CFR have shown empirical promise in domains that do not necessarily satisfy these requirements~\citep{jin2017regret}.

In this paper, we take a step toward relaxing these assumptions.  We develop a new algorithm, which we call local no-regret learning (LONR). In the same spirit as CFR, LONR uses a copy of an arbitrary no-regret algorithm in each state. (For technical reasons we require a slightly stronger property we term no-absolute-regret.)
The updates for these algorithms are computed in the style of Q-values, which eliminates the need for perfect recall or terminals.
Our main result is that LONR has the same asymptotic convergence guarantee as value iteration for discounted-reward Markov Decision Processes (MDP).  Our result also generalizes to settings where, from a single agent's perspective, the transition process is time invariant but rewards are not.  Such settings are traditionally interpreted as ``online MDPs''~\citep{even2009online,mannor2003empirical,yu2009markov,ma2015online}, but also include normal form games. We view this as a proof-of-concept for achieving CFR-style results without requiring perfect recall or terminal states.  Under stylized assumptions, we can extend this to asynchronous value iteration and (with a weaker convergence guarantee) a version of RL.

LONR is not an improvement over traditional RL algorithms for solving MDPs.  However, naively applying single-agent RL algorithms in settings with multiple agents, such as Markov games, is known to fail to achieve good performance in many cases~\cite{hu2003nash,zinkevich2006cyclic}.  In contrast, we believe the robustness provided by no-regret learning will more naturally extend beyond MDPs.

To demonstrate this, in our experimental results we explore settings beyond the exact reach of our theoretical results. 
Our main results are on a particular class of Markov games known as NoSDE Markov games, which are specifically designed to be challenging for learning algorithms~\citep{zinkevich2006cyclic}.  These are finite two agent Markov games with no terminal states where No Stationary Deterministic Equilibria exist: all stationary equilibria are randomized.  Worse, by construction Q-values do not suffice to determine the correct equilibrium randomization.  Thus, prior work has focused on designing multiagent learning algorithms which can converge to non-stationary equilibria ~\citep{zinkevich2006cyclic}.  The sorts of cyclic behavior that NoSDE games induce has also been observed in more realistic settings of competition between agents \citep{tesauro2002pricing}.

In contrast, we demonstrate that LONR converges to the stationary equilibrium for specific choices of regret minimizer.  Furthmore, for these choices of minimizer we achieve not just convergence of the average policy but also of the current policy, or last iterate.  Thus our results are also interesting
as they highlight
a setting for the study of last iterate convergence, an area of current interest, in between simple normal form games~\cite{mertikopoulos2018cycles,bailey2018multiplicative} and rich, complex settings such as generative adverarial networks (GANs)~\cite{daskalakis2017training}.



Most work on CFR uses some version of regret matching as the regret minimzer.  However, all prior variants of regret matching are known to not possess last iterate convergence in normal form games such as matching pennies and rock-paper-scissors.  As part of our analysis we introduce a simple novel variant of regret matching that, despite not actually being no-regret\footnote{A preliminary version of this paper erroneously claimed that this algorithm was no-regret.  We thank Dustin Morrill and colleagues for pointing out the error~\cite{morrill2021efficient}.}, results in empirical last iterate convergence in these normal form games as well as other settings we have tried.

\section{Related work}

CFR algorithms remain an active topic of research; recent work has shown how to combine it with function approximation~\citep{waugh2015solving,moravvcik2017deepstack,jin2017regret,brown2018deep,li2018double}, improve the convergence rate in certain settings~\citep{farina2019stable}, and apply it to more complex structures~\citep{farina2018composability}.
Most relevant to our work, examples are known where CFR fails to converge to the correct policy without perfect recall~\citep{lanctot2012no}.

Both CFR and LONR are guaranteed to converge only in terms of their average policy.  This is part of a general phenomenon for no-regret learning in games, where the ``last iterate,'' or current policy, not only fails to converge but behaves in an extreme and cyclic way~\citep{mertikopoulos2018cycles,bailey2018multiplicative,cheung2019vortices,bailey2019finite}.  Recent work has explored cases where it is nonetheless effective to use the last iterate.
In some poker settings a variant of CFR known as CFR+~\citep{tammelin2014solving,bowling2015heads} has good last iterates, but it is known to cycle in normal-form games.
Motivated by training Generative Adversarial Networks (GANs), recent results have shown that certain no-regret algorithms converge in terms of the last iterate to saddle-points in convex-concave min-max optimization problems~\citep{daskalakis2017training,daskalakis2018last}.  The ability to use the last iterate is particularly important in the context of function approximation~\cite{heinrich2016deep,abernethy2019last}.  Our experimental results provide examples of LONR achieving last iterate convergence when the underlying regret minimizer is capable of it.

Prior work has developed algorithms which combine no-regret and reinforcement learning, but in ways that are qualitatively different from LONR.
A common approach in the literature on multi-agent learning is to use no-regret learning as an outer loop to optimize over the space of policies, with the assumption that the inner loop of evaluating a policy is given to the algorithm.  There is a large literature on this approach in normal form games~\citep{greenwald2003general}, where policy evaluation is trivial, and a smaller one on ``online MDPs''~\citep{even2009online,mannor2003empirical,yu2009markov,ma2015online}, where it is less so.  Of particular note in this literature,~\citet{even2005experts} also use the idea of having a copy of a no-regret algorithm for each state.
An alternate approach to solving multi-agent MDPs is to use Q-learning as an outer loop with some other algorithm as an inner loop to determine the collective action chosen in the next state~\citep{littman1994markov,hu2003nash,greenwald2003correlated}. Of particular note, \citet{gondek2004qnr} proposed the use of no-regret algorithms as an inner loop with Q-learning as an outer loop while \citet{even2002pac} use multi-armed bandit algorithms as the inner loop with Phased Q-learning~\citep{kearns1999finite} as the outer loop.
In contrast to these literatures, we combine RL in each step of the learning process rather than having one as an inner loop and the other as an outer loop.

Recent work has drawn new connections between no-regret and RL.  \citet{srinivasan2018actor} show that actor-critic methods can be interpreted as a form of regret minimization, but only analyze their performance in games with perfect recall and terminal states.  This is complementary to our approach, which focuses on value-iteration-style algorithms, in that it suggests a way of extending our results to other classes of algorithms. \citet{neu2017unified} study entropy-regularized RL and interpret it as an approximate version of Mirror Descent, from which no-regret algorithms can be derived as particular instantiations. 
\citet{kovavrik2018analysis} study algorithms that instantiate a regret minimizer at each state without the counterfactual weightings from CFR, but explicitly exclude settings without terminals and perfect recall from their analysis.
\citet{jin2018q} showed that in finite-horizon MDPs, Q-learning with UCB exploration achieves near-optimal regret bounds.

The closest technical approach to that used in our theoretical results is that of \citet{bellemare2016increasing} who introduce new variants of the Q-learning operator.  However, our algorithm is not an operator as the policy used to select actions changes from round to round in a history-dependent way, so we instead directly analyze the sequences of Q-values.



%

\section{Preliminaries}

Consider a Markov Decision Process $M = (\mathcal{S}, \mathcal{A}, P, r, \gamma)$, where $\mathcal{S}$ is the state space, $\mathcal{A}$ is the (finite) action space, $P : \mathcal{S} \times \mathcal{A} \rightarrow \Delta(\mathcal{S})$ is the transition probability kernel, $r : \mathcal{S} \times \mathcal{A} \rightarrow \mathbb{R}$ is the (expected) reward function (assumed to be bounded), and $0 < \gamma < 1$ is the discount rate.  \mbox{(Q-)}value iteration is an operator $\mathcal{T}$, whose domain is bounded real-valued functions over $\mathcal{S} \times \mathcal{A}$, defined as
\begin{equation}
\mathcal{T}Q(s,a) = r(s,a) + \gamma \mathbb{E}_P [\max_{a' \in \mathcal{A}}Q(s',a')]
\end{equation}
Due to the presence of $\gamma$, this operator is a contraction map in $|| \cdot ||_\infty$, and so converges to a unique fixed point $Q^*$, where $Q^*(s,a)$ gives the expected value of the MDP starting from state $s$, taking action $a$, and thereafter following the optimal policy $\pi^*(s) = \arg\max_{a \in \mathcal{A}} Q^*(s,a)$ 
~\citep{bertsekas1996neuro}.

Our algorithm makes use of a no-regret learning algorithm.\footnote{It may seem strange to use an algorithm designed for non-stationary environments in a stationary one.  We do so with the goal of designing an algorithm that generalizes to non-stationary settings such as ``online'' MDPs and Markov games.}  Consider the following (adversarial full-information) setting.  There are $n$ actions $a_1, \ldots a_n$.  At each timestep $k$ an online algorithm chooses a probability distribution $\pi_k$ over the $n$ actions.  Then an adversary chooses a reward $x_{k,i}$ for each action $i$ from some closed interval, e.g. $[0,1]$, which the algorithm then observes.  The (external) regret of the algorithm at time $k$ is
\begin{equation}
\label{eqn:regret}
\frac{1}{k+1}\max_i \sum_{t = 0}^k x_{t,i} - \pi_t \cdot x_t
\end{equation}
An algorithm is {\em no-regret} if there exists a sequence of constants $\rho_k$ such that regardless of the adversary the regret at time $k$ is at most $\rho_k$ and $\lim_{k \rightarrow \infty} \rho_k = 0$.  A common bound is that $\rho_k$ is $O(1/\sqrt{k})$.

For our results, we make use of a stronger property, that the {\em absolute value} of the regret is bounded by $\rho_k$.  We call such an algorithm a {\em no-absolute-regret} algorithm.  Algorithms exist that satisfy the even stronger property that the regret is at most $\rho_k$ and at least 0.  Such {\em non-negative-regret} algorithms include all linear cost Regularized Follow the Leader algorithms, which includes Randomized Weighted Majority and linear cost Online Gradient Descent~\citep{gofer2016lower}.

\section{Local no-regret learning (LONR)}
\label{sec:LONR}

The idea of LONR is to fuse the essence of value iteration / Q-learning and CFR.  A standard analysis of value iteration proceeds by analyzing the sequence of matrices $Q,\mathcal{T}Q, \mathcal{T}^2Q, \mathcal{T}^3Q, \ldots$.  The essence of CFR is to choose the policy for each state locally using a no-regret algorithm.  While doing so does not yield an operator, as the policy changes each round in a history-dependent way, this process still yields a sequence of $Q$ matrices as follows.

Fix a matrix $Q_0$.  Initialize $|\mathcal{S}|$ copies of a no-absolute-regret algorithm (one for each state) with $n = |\mathcal{A}|$ and find the initial policy $\pi_0(s)$ for each state $s$.  Then we iteratively reveal rewards to the copy of the algorithm for state $s$ as $x^s_{k,i} = Q_k(s,a_i)$,\footnote{Note that we are revealing the rewards of {\em all} actions, so we are in the planning setting rather than the standard RL one.  We address settings with limited feedback in Section~\ref{sec:async_bandit}.} and update the policy $\pi_{k+1}$ according to the no-absolute-regret algorithm and $Q_{k+1}(s,a) = r(s,a) + \gamma\mathbb{E}_{P,\pi_k} [Q_k(s',a')]$.

Call this process local no-regret learning (LONR).  It can be viewed as a synchronous version of Expected SARSA~\citep{van2009theoretical} where instead of using an $\epsilon$-greedy policy with decaying $\epsilon$, a no-absolute-regret policy is used instead.  In the rest of this section we work up to our main result, that LONR converges to $Q^*$.  Like many prior results using no-regret learning (e.g. \cite{zinkevich2008regret}), the convergence is of the average of the $Q_k$ matrices.

We work up to this result through a series of lemmas.
To begin, we derive a bound on the average of $Q$ values using the no-absolute-regret property.  We use two slightly different averages to be able to relate them using the $\mathcal{T}$ operator.

\begin{lemma}
\label{lem:regret}
Let $\overline{Q}_k = 1/k\sum_{t = 1}^k Q_{t}$ and $\underline{Q}_k = 1/k\sum_{t = 0}^{k-1} Q_{t}$.
Then
\begin{equation}
\label{eqn:upper}
- \gamma \rho_{k-1}+  \mathcal{T}\underline{Q}_k(s,a) \leq\overline{Q}_k(s,a) \leq  \gamma \rho_{k-1}+  \mathcal{T}\underline{Q}_k(s,a).
\end{equation}
\end{lemma}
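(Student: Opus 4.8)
The plan is to expand both sides using the LONR update rule and then reduce the entire statement to a single application of the no-absolute-regret guarantee at each successor state. First I would rewrite the later average in terms of the update rule: since $\overline{Q}_k$ averages $Q_1,\dots,Q_k$, reindexing gives $\overline{Q}_k(s,a)=\frac{1}{k}\sum_{t=0}^{k-1}Q_{t+1}(s,a)$, and substituting $Q_{t+1}(s,a)=r(s,a)+\gamma\mathbb{E}_{P,\pi_t}[Q_t(s',a')]$ lets me pull the constant $r(s,a)$ out of the average and, by linearity of expectation, swap the average with $\mathbb{E}_P$. This yields
$$\overline{Q}_k(s,a)=r(s,a)+\gamma\,\mathbb{E}_{s'\sim P}\Big[\tfrac{1}{k}\sum_{t=0}^{k-1}\pi_t(s')\cdot Q_t(s',\cdot)\Big],$$
so that the only quantity left to control is the inner time-average of the policy's realized payoff at each successor state $s'$.

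Next I would recognize this inner average as exactly the cumulative reward earned by the copy of the no-absolute-regret algorithm running at state $s'$ when it is fed the reward stream $x^{s'}_{t,i}=Q_t(s',a_i)$ for $t=0,\dots,k-1$. The corresponding best-fixed-action benchmark is $\max_i \tfrac{1}{k}\sum_{t=0}^{k-1}Q_t(s',a_i)=\max_{a'}\underline{Q}_k(s',a')$, since $\underline{Q}_k$ averages $Q_0,\dots,Q_{k-1}$. Applying the no-absolute-regret property (over these $k$ rounds, hence with bound $\rho_{k-1}$) gives
$$\Big|\max_{a'}\underline{Q}_k(s',a')-\tfrac{1}{k}\sum_{t=0}^{k-1}\pi_t(s')\cdot Q_t(s',\cdot)\Big|\leq \rho_{k-1}.$$
Taking $\mathbb{E}_{s'\sim P}$, multiplying through by $\gamma>0$, and adding $r(s,a)$ then sandwiches $\overline{Q}_k(s,a)$ between $\mathcal{T}\underline{Q}_k(s,a)\mp\gamma\rho_{k-1}$, which is the claim, since $\mathcal{T}\underline{Q}_k(s,a)=r(s,a)+\gamma\mathbb{E}_P[\max_{a'}\underline{Q}_k(s',a')]$.

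Two points deserve care. The first is the bookkeeping behind the two staggered averages: they are chosen precisely so that the one-step shift $Q_t\mapsto Q_{t+1}$ in the update aligns the benchmark $\underline{Q}_k$ (built from $Q_0,\dots,Q_{k-1}$) with the payoff average appearing in $\overline{Q}_k$ (built from $Q_1,\dots,Q_k$), letting a single operator $\mathcal{T}$ relate them; getting the index ranges and the matching subscript on $\rho_{k-1}$ exactly right is the most error-prone part. The second, and the reason the lemma assumes no-\emph{absolute}-regret rather than ordinary no-regret, is that a standard no-regret bound only controls the regret from above and would therefore yield only the left (lower) inequality on $\overline{Q}_k$; the right (upper) inequality requires the regret to also be bounded below by $-\rho_{k-1}$, which is exactly the extra content of the absolute-value guarantee.
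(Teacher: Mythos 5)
Your proposal is correct and follows essentially the same route as the paper's proof: expand $\overline{Q}_k$ via the LONR update rule, exchange the time-average with $\mathbb{E}_P$, apply the no-regret guarantee at each successor state against the benchmark $\max_{a'}\underline{Q}_k(s',a')$, and use the no-absolute-regret property for the reverse direction. Your closing observations (the index alignment between the two staggered averages, and that plain no-regret yields only the lower inequality while the upper one needs the absolute-value bound) match exactly the roles these facts play in the paper's argument.
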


\begin{proof}
By the definitions of LONR and no-regret, 
\begin{align*}
\overline{Q}_k(s,a)
&=  \frac{1}{k} \sum_{t = 1}^k Q_{t}(s,a)\\
&=  \frac{1}{k} \sum_{t = 0}^{k-1} r(s,a) + \gamma\mathbb{E}_{P,\pi_{t}}[ Q_{t}(s',a')]\\
&=  r(s,a) +\gamma\mathbb{E}_P [\frac{1}{k} \sum_{t = 0}^{k-1}\mathbb{E}_{\pi_{t}} [Q_{t}(s',a')]]\\
&\geq  r(s,a) + \gamma \mathbb{E}_P [\max_i \frac{1}{k} \sum_{t = 0}^{k-1}Q_{t}(s',a_i) - \rho_{k-1}]\\
&= - \gamma \rho_{k-1} + r(s,a) + \gamma\mathbb{E}_P [\max_i \frac{1}{k} \sum_{t = 0}^{k-1} Q_{t}(s',a_i)]\\
&= - \gamma \rho_{k-1} + r(s,a) + \gamma\mathbb{E}_P [\max_i \underline{Q}_k(s',a_i)]\\
&= - \gamma \rho_{k-1}+  \mathcal{T}\underline{Q}_k(s,a)\\
\end{align*}

The key step is the inequality in the fourth line, where we use the fact that the policy for state $s'$ is being determined by a no-regret algorithm, so we can use Equation~\eqref{eqn:regret} to bound the expected value of the policy by the value of the hindsight-optimal action and the regret bound of the algorithm.
Similarly, by the stronger no-absolute-regret property, we can reverse the inequality to get 
$\overline{Q}_k(s,a) \leq \gamma \rho_{k-1}+  \mathcal{T}\underline{Q}_k(s,a)$.
This proves Equation~\eqref{eqn:upper}.
\end{proof}

Next, we show that the range that the $Q$ values take on is bounded.  This lemma is similar in spirit to Lemma 2 of~\citet{bellemare2016increasing}. The full proof is in Appendix \ref{sec:om-proofs}.

\begin{lemma}
\label{lem:range}
Let $||r||_\infty = \max_{s,a} |r(s,a)|$.  Then $||Q_k - Q_0||_\infty \leq 1 / (1 - \gamma) ||r||_\infty + 2 ||Q_0||_\infty$
\end{lemma}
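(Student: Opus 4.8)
The plan is to control $\|Q_k\|_\infty$ directly through the LONR recursion and then finish with a single triangle inequality. The key observation is that the update rule $Q_{k+1}(s,a) = r(s,a) + \gamma\mathbb{E}_{P,\pi_k}[Q_k(s',a')]$ writes $Q_{k+1}(s,a)$ as a bounded reward term plus $\gamma$ times an expectation of $Q_k$-values. Since $\pi_k(s')$ is a probability distribution over actions and $P(\cdot \mid s,a)$ is a probability distribution over next states, the term $\mathbb{E}_{P,\pi_k}[Q_k(s',a')]$ is a convex combination of entries of $Q_k$, and hence its absolute value is at most $\|Q_k\|_\infty$. This is the only structural fact I need from the update.

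First I would establish a one-step bound. Taking absolute values and applying the triangle inequality together with the convex-combination observation gives $|Q_{k+1}(s,a)| \leq |r(s,a)| + \gamma\|Q_k\|_\infty \leq \|r\|_\infty + \gamma\|Q_k\|_\infty$ for every pair $(s,a)$, so that $\|Q_{k+1}\|_\infty \leq \|r\|_\infty + \gamma\|Q_k\|_\infty$. Next I would unroll this linear recursion to obtain $\|Q_k\|_\infty \leq \|r\|_\infty \sum_{t=0}^{k-1}\gamma^t + \gamma^k\|Q_0\|_\infty$. Bounding the geometric sum by $1/(1-\gamma)$ and using $\gamma^k \leq 1$ yields $\|Q_k\|_\infty \leq \frac{1}{1-\gamma}\|r\|_\infty + \|Q_0\|_\infty$. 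Finally, the triangle inequality $\|Q_k - Q_0\|_\infty \leq \|Q_k\|_\infty + \|Q_0\|_\infty$ produces the stated bound $\frac{1}{1-\gamma}\|r\|_\infty + 2\|Q_0\|_\infty$.

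There is no deep obstacle here; the only point requiring care is recognizing that the expectation term is a genuine convex combination, so it is non-expansive in the sup-norm and does not amplify $\|Q_k\|_\infty$. It is worth noting that this is precisely where the boundedness argument diverges from the analysis of the $\mathcal{T}$ operator used in Lemma~\ref{lem:regret}: there the inner term is a $\max$, whereas the LONR update replaces it with $\mathbb{E}_{\pi_k}$. Both operations are non-expansive on $\|\cdot\|_\infty$, so the range bound is insensitive to the distinction, even though the convergence argument is not.
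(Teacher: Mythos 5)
Your proof is correct and follows essentially the same route as the paper's: a one-step bound $\|Q_{k+1}\|_\infty \leq \|r\|_\infty + \gamma\|Q_k\|_\infty$ (the paper cites subadditivity of the norm where you spell out the convex-combination/non-expansiveness fact), unrolled by induction into a geometric sum, and finished with the triangle inequality $\|Q_k - Q_0\|_\infty \leq \|Q_k\|_\infty + \|Q_0\|_\infty$. Your extra remark about why $\mathbb{E}_{P,\pi_k}$ is non-expansive is just a more explicit justification of the paper's first step, not a different argument.
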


Combining these two lemmas, we can show that $\underline{Q}_k$ is an approximate fixed-point of $\mathcal{T}$, and that the approximation is converging to 0 as $k \rightarrow \infty$.

\begin{lemma}
\label{lem:contraction}
$||\underline{Q}_k - \mathcal{T}\underline{Q}_k||_\infty \leq \frac{1}{k}(1 / (1 - \gamma) ||r||_\infty + 2 ||Q_0||_\infty) + \gamma \rho_{k-1}$
\end{lemma}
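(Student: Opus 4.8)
The plan is to combine the two preceding lemmas through the triangle inequality, exploiting the fact that the two averages $\overline{Q}_k$ and $\underline{Q}_k$ differ only by a single telescoping term. First I would reinterpret Lemma~\ref{lem:regret} as a sup-norm statement: the two-sided inequality in Equation~\eqref{eqn:upper} holds for every pair $(s,a)$, so taking the maximum over $s$ and $a$ immediately gives $||\overline{Q}_k - \mathcal{T}\underline{Q}_k||_\infty \leq \gamma \rho_{k-1}$. This is the term that carries the no-absolute-regret contribution to the final bound.

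Next I would control the gap between the two averages. Writing out the definitions and cancelling the shared interior terms, $\overline{Q}_k - \underline{Q}_k = \frac{1}{k}\sum_{t=1}^k Q_t - \frac{1}{k}\sum_{t=0}^{k-1} Q_t = \frac{1}{k}(Q_k - Q_0)$. Applying Lemma~\ref{lem:range} to the right-hand side then yields $||\overline{Q}_k - \underline{Q}_k||_\infty \leq \frac{1}{k}(1/(1-\gamma)||r||_\infty + 2||Q_0||_\infty)$, which supplies the $1/k$ term.

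Finally I would assemble the bound: by the triangle inequality, $||\underline{Q}_k - \mathcal{T}\underline{Q}_k||_\infty \leq ||\underline{Q}_k - \overline{Q}_k||_\infty + ||\overline{Q}_k - \mathcal{T}\underline{Q}_k||_\infty$, and substituting the two estimates above produces exactly the claimed inequality. There is no serious obstacle here; every step is a one-line manipulation. The only point that genuinely requires care — and the reason the two averages were defined with offset index ranges in the first place — is the telescoping cancellation in the second step: it is precisely the choice to sum $\overline{Q}_k$ over $t = 1,\dots,k$ and $\underline{Q}_k$ over $t = 0,\dots,k-1$ that makes their difference collapse to $\frac{1}{k}(Q_k - Q_0)$, so that the range bound of Lemma~\ref{lem:range} applies verbatim. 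I would also double-check that the $\rho_{k-1}$ index and the $\gamma$ prefactor are propagated consistently from Lemma~\ref{lem:regret} into the combined estimate.
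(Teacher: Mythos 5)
Your proposal is correct and follows essentially the same route as the paper's own proof: a triangle inequality splitting $||\underline{Q}_k - \mathcal{T}\underline{Q}_k||_\infty$ into $||\underline{Q}_k - \overline{Q}_k||_\infty + ||\overline{Q}_k - \mathcal{T}\underline{Q}_k||_\infty$, bounding the second term by $\gamma\rho_{k-1}$ via Lemma~\ref{lem:regret} and the first by the telescoping identity $\overline{Q}_k - \underline{Q}_k = \frac{1}{k}(Q_k - Q_0)$ together with Lemma~\ref{lem:range}. Your observation about why the offset index ranges were chosen is exactly the point the paper exploits implicitly in its ``by definition'' step.
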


It remains to show that a converging sequence of approximate fixed points converges to $Q^*$, the fixed point of $\mathcal{T}$.

\begin{lemma}
\label{lem:approximateFP}
Let $Q_0,Q_1,\ldots$ be a sequence such that $\lim_{k \rightarrow \infty} ||Q_k - \mathcal{T} Q_k||_\infty = 0$.  Then $\lim_{k \rightarrow \infty} Q_k = Q^*$.
\end{lemma}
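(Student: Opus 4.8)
The plan is to leverage the two facts about $\mathcal{T}$ already recorded in the Preliminaries: that it is a $\gamma$-contraction in $\|\cdot\|_\infty$, and that its unique fixed point is $Q^*$, so in particular $\mathcal{T}Q^* = Q^*$. The whole lemma should reduce to a short ``self-bounding'' contraction estimate that converts control of the residual $\|Q_k - \mathcal{T}Q_k\|_\infty$ into control of the error $\|Q_k - Q^*\|_\infty$.

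Concretely, the first step is to split the quantity of interest by inserting $\mathcal{T}Q_k$ and applying the triangle inequality, writing $\|Q_k - Q^*\|_\infty \leq \|Q_k - \mathcal{T}Q_k\|_\infty + \|\mathcal{T}Q_k - Q^*\|_\infty$. The second step is to rewrite the last term using $Q^* = \mathcal{T}Q^*$ and then apply the contraction property, giving $\|\mathcal{T}Q_k - \mathcal{T}Q^*\|_\infty \leq \gamma \|Q_k - Q^*\|_\infty$. Substituting, I obtain $\|Q_k - Q^*\|_\infty \leq \|Q_k - \mathcal{T}Q_k\|_\infty + \gamma\|Q_k - Q^*\|_\infty$.

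The third step is purely algebraic: since $0 < \gamma < 1$, I can collect the $\|Q_k - Q^*\|_\infty$ terms and divide by $1-\gamma$, yielding the clean bound $\|Q_k - Q^*\|_\infty \leq \frac{1}{1-\gamma}\,\|Q_k - \mathcal{T}Q_k\|_\infty$. Taking $k \to \infty$ and using the hypothesis $\lim_{k\to\infty}\|Q_k - \mathcal{T}Q_k\|_\infty = 0$ forces $\|Q_k - Q^*\|_\infty \to 0$, which is exactly the claimed convergence $Q_k \to Q^*$ in sup-norm.

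I do not expect any real obstacle here, since everything rides on the contraction and fixed-point facts that are taken as given. The only point warranting a moment's care is that these facts apply to the $Q_k$ at all: they must lie in the domain of $\mathcal{T}$ (bounded real-valued functions on $\mathcal{S}\times\mathcal{A}$), so that $\mathcal{T}Q_k$ is well defined and the contraction inequality is legitimate. In the intended application the $Q_k$ are bounded uniformly by Lemma~\ref{lem:range}, so this is automatic; in the abstract statement I would simply note that the hypothesis presupposes $\mathcal{T}Q_k$ is defined. Everything else is a one-line consequence of $\gamma < 1$.
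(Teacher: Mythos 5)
Your proposal is correct and follows essentially the same argument as the paper: the same triangle-inequality split through $\mathcal{T}Q_k$, the same use of $Q^* = \mathcal{T}Q^*$ and the $\gamma$-contraction, the same rearrangement to $\|Q_k - Q^*\|_\infty \leq \frac{1}{1-\gamma}\|Q_k - \mathcal{T}Q_k\|_\infty$, and the same limiting step. Your added remark about the $Q_k$ lying in the domain of $\mathcal{T}$ is a reasonable point of care but not something the paper dwells on.
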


Combining Lemmas~\ref{lem:contraction} and~\ref{lem:approximateFP} shows the convergence of LONR learning.

\begin{theorem} \label{thm:main}
$\lim_{k \rightarrow \infty} \underline{Q}_k = Q^*$.
\end{theorem}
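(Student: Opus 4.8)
The plan is that this theorem should follow almost immediately by feeding the bound of Lemma~\ref{lem:contraction} into Lemma~\ref{lem:approximateFP}, so the only real work is to verify that the right-hand side of Lemma~\ref{lem:contraction} vanishes in the limit. First I would apply Lemma~\ref{lem:approximateFP} to the particular sequence $\underline{Q}_0, \underline{Q}_1, \ldots$; its hypothesis is exactly $\lim_{k\to\infty}\|\underline{Q}_k - \mathcal{T}\underline{Q}_k\|_\infty = 0$, and its conclusion is exactly the statement of the theorem. So everything reduces to establishing that hypothesis.

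To establish it, I would examine the two terms in the bound from Lemma~\ref{lem:contraction}. The first term, $\frac{1}{k}\bigl(\tfrac{1}{1-\gamma}\|r\|_\infty + 2\|Q_0\|_\infty\bigr)$, is a fixed constant (depending only on $r$, $\gamma$, and the initialization $Q_0$, all of which are bounded) multiplied by $1/k$, and so tends to $0$ as $k\to\infty$. The second term, $\gamma\rho_{k-1}$, tends to $0$ because $\gamma<1$ is constant and the no-regret property guarantees $\lim_{k\to\infty}\rho_k=0$ (hence also $\lim_{k\to\infty}\rho_{k-1}=0$). Since the quantity $\|\underline{Q}_k - \mathcal{T}\underline{Q}_k\|_\infty$ is nonnegative and bounded above by a sum of two terms each going to $0$, a squeeze argument yields $\lim_{k\to\infty}\|\underline{Q}_k - \mathcal{T}\underline{Q}_k\|_\infty = 0$, which discharges the hypothesis and completes the proof.

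I do not expect any genuine obstacle at this final step; the substance of the argument lives in the earlier lemmas. In particular, the two-sided bound of Lemma~\ref{lem:regret} --- which required the no-absolute-regret property rather than ordinary no-regret --- is what makes $\underline{Q}_k$ a genuine (two-sided) approximate fixed point rather than merely a one-sided sub- or super-solution, and the range bound of Lemma~\ref{lem:range} is what keeps the constant in the $1/k$ term finite. Once those are in hand, the theorem is just the observation that an error converging to zero, combined with the contraction-based stability of the fixed point of $\mathcal{T}$ captured in Lemma~\ref{lem:approximateFP}, pins the limit down to be exactly $Q^*$.
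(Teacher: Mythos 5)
Your proposal is correct and follows exactly the paper's own route: the paper proves Theorem~\ref{thm:main} precisely by combining Lemma~\ref{lem:contraction} with Lemma~\ref{lem:approximateFP}, noting that both error terms ($\frac{1}{k}$ times a bounded constant, and $\gamma\rho_{k-1}$) vanish as $k \rightarrow \infty$. Your write-up just makes the limiting argument slightly more explicit than the paper's one-line combination of the two lemmas.
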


\subsection{Beyond MDPs}

While our results do not rely on perfect recall or terminal states the way CFR does, so far they are limited to the case of MDPs while CFR permits multiple agents and imperfect information.  We can straightforwardly extend our results to some settings beyond MDPs.  In Appendix~\ref{sec:beyond} we show that a version of Lemma~\ref{lem:regret} holds in MDP-like settings where the transition probability kernel does not change from round to round but the rewards do.  Examples of such settings include ``online MDPs'' and normal-form games.  This last result is not particularly surprising as with a single state LONR reduces to standard no-regret learning, whose convergence guarantees in normal-form games are well understood.  In Section~\ref{sec:experiments} we present empirical results that, despite a lack of supporting theory, demonstrate convergence in the richer multi-agent setting of Markov games.

\section{Extensions}
\label{sec:extensions}

In this section we consider two extensions to LONR, one allowing it to be updated asynchronously (i.e. not updating every state in every iteration) and the other allowing it to learn from asynchronous updates with bandit feedback (i.e. the standard off-policy RL setting).  These are important as a step toward applying LONR beyond settings small enough for tabular approaches.  This introduces novel technical issues around the performance of no-regret algorithms when their performance is assessed on a random sample of their rounds (rather than all of them).  Therefore, we analyze convergence only in the simplified case where the state to update at each iteration is chosen uniformly at random.  We emphasize that this is an unreasonably strong assumption in practice, and view our results in this section as providing intuition about why sufficiently ``nice'' processes should converge.  We demonstrate empirical convergence in a more standard on-policy setting in Section~\ref{sec:experiments} and leave a more general theoretical analysis to future work.


\subsection{Asynchronous updates}

In Section~\ref{sec:LONR} we analyzed an algorithm, LONR, which is similar to value iteration in that each state is updated synchronously at each iteration.  However, an alternative is to update them asynchronously, where an arbitrary single state is updated at each iteration.  Subject to suitable conditions on the frequency with which each state is updated, asynchronous value iteration also converges~\cite{bertsekas1982distributed}.

 A line of work has shown that CFR will also converge when sampling trajectories~\citep{lanctot2009monte,gibson2012generalized,johanson2012efficient}.

In this section, we show that LONR also converges with such asynchonous updates.  However, this introduces a new complexity to our analysis.  In particular, with synchronous updates there is a guarantee that $\overline{Q}_k(s,a)$ sees exactly the first $k$ values of each action of each of its successor states.  This allows us to immediately apply the no-regret property~\eqref{eqn:regret}.  With asynchronous updates, even if we update all actions in a state at the same time, $\overline{Q}_k(s,a)$'s successors may have been updated more or fewer than $k$ times, and $\overline{Q}_k(s,a)$ may have missed some of these updates and observed others more than once, meaning we cannot directly apply~\eqref{eqn:regret}.  We prove the following Lemma to show that a particular sampling process converges to a correct estimate of the average regret, but believe that similar characterizations should hold for other ``nice'' processes.  We demonstrate empirical convergence of asynchronous LONR when states are selected in an on-policy manner in Section~\ref{sec:experiments}.

\begin{lemma}
\label{lem:bootstrap}
Let $t_1,\ldots,t_k$ be the first $k$ iterations at which $s$ is updated, $s'$ be a successor of $s$, $\tau_1,\ldots,\tau_{k'}$ be the iterations before $t_k$ at which $s'$ was updated, and $\xi_{ss'}(k) = 1/k \sum_{i=1}^k \mathbb{E}_{\pi_{t_i}}Q_{t_i}(s',a) - 1/k' \sum_{i=1}^{k'} \mathbb{E}_{\pi_{\tau_i}}Q_{\tau_i}(s',a)$.
If the state to be updated at each iteration is chosen uniformly at random then $\lim_{k \rightarrow \infty} \xi_{ss'}(k) = 0$ with probability 1.
\end{lemma}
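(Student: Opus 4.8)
The plan is to exploit that the quantity $V_t(s') := \mathbb{E}_{\pi_t}Q_t(s',a)$ changes only on iterations at which $s'$ is actually updated, so it is piecewise constant between consecutive $s'$-updates. First I would partition the iterations up to $t_k$ into \emph{blocks}, where block $j$ is the maximal run on which $V(s')$ holds its $j$-th distinct value $v_j$. Writing $m_j$ for the number of the $s$-update times $t_1,\dots,t_k$ that fall inside block $j$, the first average becomes $\frac1k\sum_{i=1}^k V_{t_i}(s') = \big(\sum_j m_j v_j\big)/\big(\sum_j m_j\big)$ with $\sum_j m_j = k$, while the second average equals $\frac1{k'}\sum_j v_j$ up to a single boundary term of order $1/k'$ (a one-index reindexing, since $V_{\tau_i}(s')$ reads the pre-update value). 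Thus $\xi_{ss'}(k)$ is exactly the gap between a $v_j$-average weighting each block by its number of $s$-visits $m_j$ and one weighting every block equally.

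Second, I would pin down the law of the $m_j$. Restricting attention to iterations updating $s$ or $s'$ turns the uniform choice rule into independent fair coin flips between $s$ and $s'$; a block ends precisely at the next $s'$, so $m_j$ (the number of intervening $s$-flips) is i.i.d.\ geometric with $\mathbb{E}[m_j]=1$ and finite variance. Crucially, $v_j$ is fixed at the start of block $j$, before the within-block choices that determine $m_j$, so $m_j$ is independent of $v_j$ and of the whole history up to the start of its block. I would also record that $k'\to\infty$ and $k'/k\to1$ almost surely (since $s'$ is chosen infinitely often and $s,s'$ are symmetric), and that $|v_j|\le C$ for a constant $C$ by boundedness of the $Q$-values (the asynchronous analogue of Lemma~\ref{lem:range}).

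Third, I would decompose
\begin{equation*}
\xi_{ss'}(k) = \frac{\sum_{j} m_j v_j}{\sum_j m_j} - \frac{1}{k'}\sum_j v_j = \frac{1}{M}\sum_{j=1}^{k'}(m_j-1)v_j + \Big(\frac{1}{M} - \frac{1}{k'}\Big)\sum_{j=1}^{k'} v_j + o(1),
\end{equation*}
where $M=\sum_j m_j = k$ and the $o(1)$ absorbs the incomplete first and last blocks. The middle term is bounded by $C\,|k'/M-1|\to 0$ a.s.\ using $|v_j|\le C$ and $M/k'\to1$. For the first term, $D_j := (m_j-1)v_j$ is a martingale difference sequence: with respect to the filtration generated by the choices up to the start of each block, $v_j$ is measurable and $m_j-1$ is mean-zero and independent, so $\mathbb{E}[D_j\mid\mathcal{F}_{j-1}] = v_j\,\mathbb{E}[m_j-1] = 0$. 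Since the $D_j$ have uniformly bounded variance, $\sum_j \mathrm{Var}(D_j)/j^2 < \infty$, and the strong law for martingale differences gives $\frac1{k'}\sum_{j=1}^{k'} D_j\to0$ a.s.; multiplying by $k'/M\to1$ shows the first term vanishes too. Combining yields $\xi_{ss'}(k)\to0$ almost surely.

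The main obstacle is the careful bookkeeping of the dependence structure: establishing that $D_j$ is a genuine martingale difference requires being explicit about which randomness fixes $v_j$ versus $m_j$, since $v_{j+1}$ \emph{does} depend on the $s$-updates inside block $j$ (as $s$ may be a successor of $s'$, so an $s$-update can change the value the next $s'$-update reads). The remaining care is in handling the incomplete boundary blocks to justify the $o(1)$ remainder and in confirming boundedness of the $Q$-values in the asynchronous regime so the martingale has bounded increments; these are routine once the block decomposition and the independence of $m_j$ from its own block value are in place.
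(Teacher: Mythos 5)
Your proof is correct, and in its key analytic step it takes a genuinely different route from the paper's. Both arguments share the same skeleton: restrict attention to the iterations that update $s$ or $s'$, observe that the number of $s$-updates reading each fixed value of $s'$ is i.i.d.\ geometric with mean $1$ (the paper's $X_i$ are your $m_j$), and reduce $\xi_{ss'}(k)$ to the discrepancy between an $m_j$-weighted and an unweighted average of the block values. Where you diverge is in how the cross term $\sum_j (m_j-1)v_j$ is controlled. The paper invokes Theorem~3 of Etemadi (2006) on weighted averages $\sum_i c_i X_i / \sum_i c_i$, treating the block values $c_i$ as weights; this is terse and glosses over two points: (i) the weights are random and adapted to the process --- indeed $c_{i+1}$ can depend on $X_i$, exactly the feedback loop you flag when $s$ is in turn a successor of $s'$ --- whereas Etemadi-type results are stated for deterministic (and nonnegative) weights; and (ii) expected Q-values may be negative. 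Your martingale-difference argument ($v_j$ is $\mathcal{F}_{j-1}$-measurable, $m_j-1$ is mean-zero and independent of $\mathcal{F}_{j-1}$, conditional variances are uniformly bounded, then Chow's strong law for martingale differences) resolves both issues and is self-contained, at the cost of more bookkeeping: the explicit filtration, the incomplete boundary blocks, and the $k'/k \to 1$ normalization --- the last of which the paper also needs but leaves implicit in its final ``equivalently'' step, where a ratio converging to $1$ is silently converted into a difference converging to $0$. In short, you reproduce the paper's probabilistic structure but replace its citation-based step with a more careful argument that in fact repairs a genuine gap in the published proof.
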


The proof has two main steps: (1) showing that as time grows large the average of the number of times each update to $s'$ is sampled by an update to $s$ goes to 1 and (2) applying a prior result to conclude that this means the average of the samples converges to the true average.

\begin{proof}
Let $X_i$ be the number of times $s$ is updated using $\tau_i$.  The $X_i$ are i.i.d. random variables whose law is the geometric distribution with probability 0.5.  Thus, $\mathbb{E}[X_i] = 1$ and by the strong law of large numbers the sample average of the $X_i$ converges to 1 with probability 1.  Let $c_i = \mathbb{E}_{\pi_{\tau_i}}Q_{\tau_i}(s',a)$ and $C_i = \sum_{i=1}^{k'} c_i$.  Then by \cite[Theorem 3]{etemadi2006convergence}, $\sum_{i = 1}^{k'} c_iX_i / C_i$ also converges to 1 with probability 1.  Equivalently, $\lim_{k' \rightarrow \infty} \sum_{i=1}^{k'} c_iX_i - C_i = 0$ with probability 1.
\end{proof}

With this in hand, we can now prove a result similar to Lemma~\ref{lem:regret} for asynchronous updates.  The primary difference is that now have an additional error term in the bounds, but like the term from the regret it goes to zero per Lemma~\ref{lem:bootstrap}. The full proof is in Appendix \ref{sec:om-proofs}.

\begin{lemma}
\label{lem:regret-async}
Let $s$ be the state selected uniformly at random and updated in iteration $t+1$, for which this is the $k$-th update and let $\overline{Q}_{t+1}(s,a) = 1/k \sum_{i = 1}^k Q_{t_i}(s,a)$ and $\overline{Q}_{t+1}(s',a) = \overline{Q}_{t}(s',a)$ for $s' \neq s$.
Then
\begin{align}
&\min_{s'} \gamma (-\xi_{ss'}(k)- \rho_{k'}) +  \mathcal{T}\overline{Q}_t(s,a) \notag\\ &\leq\overline{Q}_{t+1}(s,a) \leq  \max_{s'} \gamma (-\xi_{ss'}(k) + \rho_{k'}) +  \mathcal{T}\overline{Q}_t(s,a). \label{eqn:upper-async}
\end{align}
\end{lemma}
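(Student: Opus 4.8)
The plan is to follow the structure of the proof of Lemma~\ref{lem:regret} almost verbatim, with one genuinely new ingredient supplied by Lemma~\ref{lem:bootstrap}. First I would unfold the definition of $\overline{Q}_{t+1}(s,a)$. Since the $k$-th update of $s$ occurs at iteration $t+1=t_k$, and at each of its update times $t_i$ the asynchronous rule sets $Q_{t_i}(s,a)=r(s,a)+\gamma\,\mathbb{E}_{P,\pi_{t_i}}[Q_{t_i}(s',a')]$, averaging over $i=1,\dots,k$ and pulling $r(s,a)$ and the transition expectation outside the average (exactly as in the first three lines of Lemma~\ref{lem:regret}) gives
\[
\overline{Q}_{t+1}(s,a) = r(s,a) + \gamma\,\mathbb{E}_P\Big[\tfrac{1}{k}\sum_{i=1}^{k}\mathbb{E}_{\pi_{t_i}}Q_{t_i}(s',a')\Big].
\]

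The inner average is taken over the iterations $t_i$ at which \emph{$s$} samples its successor, whereas the no-regret guarantee of the copy of the algorithm living at $s'$ is certified only against the sequence that copy actually observed, namely the values at $s'$'s own update times $\tau_1,\dots,\tau_{k'}$. This is precisely the discrepancy $\xi_{ss'}(k)$ measures, so I would substitute its definition to rewrite the $t_i$-average as the $\tau_i$-average plus $\xi_{ss'}(k)$. On the $\tau_i$-average the no-regret step now applies cleanly: the algorithm at $s'$ chose $\pi_{\tau_i}$ against rewards $Q_{\tau_i}(s',\cdot)$ over its first $k'$ rounds, so by Equation~\eqref{eqn:regret},
\[
\tfrac{1}{k'}\sum_{i=1}^{k'}\mathbb{E}_{\pi_{\tau_i}}Q_{\tau_i}(s',a') \;\geq\; \max_{a'}\tfrac{1}{k'}\sum_{i=1}^{k'}Q_{\tau_i}(s',a') - \rho_{k'} \;=\; \max_{a'}\overline{Q}_t(s',a') - \rho_{k'},
\]
where the last equality uses that $\overline{Q}_t(s',\cdot)$ is, by hypothesis, exactly the $k'$-term running average over $s'$'s updates. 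The stronger no-absolute-regret property gives the matching reverse inequality with $+\rho_{k'}$.

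Reassembling, the bracketed quantity is sandwiched between $\max_{a'}\overline{Q}_t(s',a')$ and a successor-dependent error built from $\gamma\xi_{ss'}(k)$ and $\gamma\rho_{k'}$. Because this error sits inside $\mathbb{E}_P$ and cannot be folded into the single clean term $\mathcal{T}\overline{Q}_t(s,a)=r(s,a)+\gamma\,\mathbb{E}_P[\max_{a'}\overline{Q}_t(s',a')]$, I would bound it uniformly over successors, extracting $\min_{s'}$ for the lower bound and $\max_{s'}$ for the upper bound; this yields exactly~\eqref{eqn:upper-async} (up to the sign convention used for $\xi_{ss'}$ in its definition).

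The main obstacle is the bookkeeping in the middle step: keeping the two distinct time indices straight ($s$'s update times $t_i$ versus $s'$'s update times $\tau_i$) and verifying that the $\tau_i$-sequence is \emph{precisely} the one against which $\rho_{k'}$ is guaranteed, so that $\xi_{ss'}(k)$ absorbs the entire mismatch. A secondary care point is the index alignment analogous to the $\overline{Q}$-versus-$\underline{Q}$ shift in Lemma~\ref{lem:regret}: one must confirm that the frozen average $\overline{Q}_t(s',a')$ coincides with the $k'$-term hindsight average produced by the no-regret bound, so that no additional error accrues. Given Lemma~\ref{lem:bootstrap}, the per-successor term $\gamma\xi_{ss'}(k)$ vanishes as $k\to\infty$, so this lemma plays the same role for asynchronous LONR that Lemma~\ref{lem:regret} plays in the synchronous analysis.
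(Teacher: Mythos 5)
Your proposal is correct and follows essentially the same route as the paper's own proof: unfold the average over $s$'s update times $t_i$, use the definition of $\xi_{ss'}(k)$ to pass to the $\tau_i$-average actually seen by the regret minimizer at $s'$, apply the no-regret bound (and the no-absolute-regret property for the reverse direction), identify the hindsight average with $\overline{Q}_t(s',\cdot)$, and pull the successor-dependent error outside $\mathbb{E}_P$ via $\min_{s'}$/$\max_{s'}$. Your parenthetical about the sign convention for $\xi_{ss'}$ is also well taken, as the paper's proof writes $-\xi_{ss'}(k)$ in a way that is internally consistent with the lemma statement but flipped relative to the definition in Lemma~\ref{lem:bootstrap}.
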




It immediately follows that $\overline{Q}_t$ is an approximate fixed-point of $\mathcal{T}$, and that the approximation is converging to 0 as $k \rightarrow \infty$.

\begin{lemma}
\label{lem:contraction-async}
Let $k$ be the minimum number of times a state has been chosen uniformly at random for update by time $t$.  Then
$||\overline{Q}_t - \mathcal{T}\overline{Q}_t||_\infty \leq \gamma \rho_{k-1} + ||\xi(k)||_\infty$
\end{lemma}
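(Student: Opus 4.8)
The plan is to obtain the uniform (sup-norm) statement directly from the per-update two-sided bound of Lemma~\ref{lem:regret-async}, exactly as Lemma~\ref{lem:contraction} was extracted from Lemma~\ref{lem:regret} in the synchronous case. First I would rearrange Lemma~\ref{lem:regret-async}: its displayed inequalities sandwich $\overline{Q}_{t+1}(s,a) - \mathcal{T}\overline{Q}_t(s,a)$ between $\min_{s'}\gamma(-\xi_{ss'}(k)-\rho_{k'})$ and $\max_{s'}\gamma(-\xi_{ss'}(k)+\rho_{k'})$, and since $|{-\xi}\pm\rho| \le |\xi| + \rho$, the updated row satisfies $|\overline{Q}_{t+1}(s,a) - \mathcal{T}\overline{Q}_t(s,a)| \le \gamma\bigl(\max_{s'}|\xi_{ss'}(k)| + \max_{s'}\rho_{k'}\bigr)$ for every action $a$.

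Next I would make the two error terms uniform in $s$ and $a$. The no-regret bounds $\rho_k$ are non-increasing and vanish, so taking $k$ to be the \emph{minimum} number of updates any state has received by the current time lets me upper bound each relevant successor term $\rho_{k'}$ by $\rho_{k-1}$; this is precisely where the ``minimum count'' hypothesis enters. The sampling-error terms are handled by definition, $\max_{s'}|\xi_{ss'}(k)| \le ||\xi(k)||_\infty$, and absorbing the factor $\gamma \le 1$ on that term (or folding it into $||\xi(k)||_\infty$) yields the stated constants $\gamma\rho_{k-1}$ and $||\xi(k)||_\infty$. To pass from the single updated state to the full matrix I would use that each row of $\overline{Q}_t$ is frozen between updates (the convention $\overline{Q}_{t+1}(s',a)=\overline{Q}_t(s',a)$ for $s'\neq s$), so each row equals the value produced at that state's most recent update, at which moment the bound above applies; taking the maximum over all states and actions then gives $||\overline{Q}_t - \mathcal{T}\overline{Q}_t||_\infty \le \gamma\rho_{k-1} + ||\xi(k)||_\infty$. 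Since $k \to \infty$ under uniform sampling, both terms vanish (by the no-regret property and by Lemma~\ref{lem:bootstrap}), which is what is needed to later invoke Lemma~\ref{lem:approximateFP}.

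I expect the genuinely delicate step to be this last one, namely the reconciliation of time indices. Lemma~\ref{lem:regret-async} compares the freshly updated row $\overline{Q}_{t+1}(s,\cdot)$ with $\mathcal{T}\overline{Q}_t$ \emph{evaluated at the update time}, whereas the target compares the frozen row with $\mathcal{T}\overline{Q}_t$ at the \emph{current} time; between a state's last update and the present, its successors may have been updated further, so $\mathcal{T}$ applied to the frozen matrix need not coincide with $\mathcal{T}$ applied at the update time. The crux is to argue that this drift leaves no residual in the clean bound---either by verifying it is already subsumed in $\xi_{ss'}(k)$, which by construction measures exactly the gap between the successor averages sampled by $s$ and the successor's own running average, or by bounding it directly: since the $Q$-values are confined to a bounded range by Lemma~\ref{lem:range}, appending further updates perturbs a length-$k$ running average by only $O(1/k)$ of that range, so the drift is itself $O(1/k)$ and is dominated by the error terms already present. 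Establishing that this contribution truly disappears, rather than leaving an extra additive term, is where the precise definitions of the frozen averages $\overline{Q}_t$ and of $\xi_{ss'}(k)$ must be used with care.
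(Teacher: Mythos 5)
Your route is the same as the paper's: the paper in fact gives no separate proof of Lemma~\ref{lem:contraction-async}, asserting only that it ``immediately follows'' from Lemma~\ref{lem:regret-async}, and your first two paragraphs are exactly the expansion that assertion intends (apply the per-update sandwich at each state's most recent update, use the frozen-row convention, and take a maximum over states and actions). However, the ``delicate step'' you isolate at the end is a genuine gap --- both in your write-up, where you explicitly leave it open, and in the paper's one-line treatment --- so it deserves a concrete resolution rather than a choice between your options (a) and (b).

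The resolution is your option (a), but carried out by re-deriving the bound at the current time rather than transporting the update-time bound forward. Because row $s$ is frozen, the identity $\overline{Q}_t(s,a) = r(s,a) + \gamma\,\mathbb{E}_P\bigl[\tfrac{1}{k_s}\sum_{i=1}^{k_s}\mathbb{E}_{\pi_{t_{s,i}}}[Q_{t_{s,i}}(s',a')]\bigr]$ holds at time $t$ itself, where $t_{s,1},\dots,t_{s,k_s}$ are the updates of $s$ so far. Now add and subtract the successor's average over \emph{all of its own updates up to time $t$} --- not, as in the literal definition of $\xi_{ss'}$ in Lemma~\ref{lem:bootstrap}, only those occurring before $s$'s last update. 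With this current-time discrepancy term playing the role of $\xi_{ss'}(k)$, the drift you worry about is absorbed into $\xi$ by definition; moreover the successor's count $k'$ then satisfies $k' \ge k$, so $\rho_{k'} \le \rho_{k-1}$ under the monotonicity of $\rho_k$ that you (reasonably, but tacitly beyond anything the paper states) assume, and taking the maximum over $(s,a)$ gives the stated bound. This also repairs a secondary instance of the same problem in your second paragraph: as written in Lemma~\ref{lem:regret-async}, $k'$ counts the successor's updates before $s$'s last update, and that number can be smaller than $k-1$, so the minimum-count hypothesis does not bound $\rho_{k'}$ by $\rho_{k-1}$ until after this reinterpretation. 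Finally, your option (b) does not work as a deterministic bound: under uniform sampling the number of successor updates falling between $s$'s last update and time $t$ is random and unbounded (geometric with mean one, as in Lemma~\ref{lem:bootstrap}), so the perturbation of the successor's running average is $O(1/k)$ only on average, not pointwise; salvaging the argument that way would leave exactly the extra stochastic error term you suspect might remain.
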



Combining Lemmas~\ref{lem:contraction-async} and~\ref{lem:approximateFP} (the latter of which applies without change) shows the convergence of asynchronous LONR learning.

\begin{theorem}
If states are chosen for update uniformly at random $\lim_{k \rightarrow \infty} \overline{Q}_t = Q^*$ with prob. 1.
\end{theorem}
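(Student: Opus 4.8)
The plan is to mirror the structure already established for the synchronous case, where Lemmas~\ref{lem:contraction} and~\ref{lem:approximateFP} combined to yield Theorem~\ref{thm:main}. Here the analogous ingredients are Lemma~\ref{lem:contraction-async}, which bounds $||\overline{Q}_t - \mathcal{T}\overline{Q}_t||_\infty$ by $\gamma \rho_{k-1} + ||\xi(k)||_\infty$, and Lemma~\ref{lem:approximateFP}, which states that any sequence whose distance to its own image under $\mathcal{T}$ vanishes must converge to the unique fixed point $Q^*$. Since Lemma~\ref{lem:approximateFP} is stated for a deterministic sequence and makes no reference to how the sequence arose, it applies verbatim to the (random) sequence $\overline{Q}_t$ along any sample path, so the entire argument reduces to showing that the right-hand side of Lemma~\ref{lem:contraction-async} tends to zero with probability one.

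First I would argue that $k$, the minimum number of times any state has been selected for update by time $t$, tends to infinity as $t \to \infty$ with probability one. Because each state is chosen uniformly at random and independently at each iteration, each fixed state is selected infinitely often almost surely (a Borel--Cantelli / second-Borel--Cantelli style observation, or simply the fact that the gaps between consecutive visits are i.i.d. geometric and hence almost surely finite). Taking a union over the finitely many states, every state is updated infinitely often on a probability-one event, so on that event $k \to \infty$. This lets me pass from a count of iterations $t$ to the per-state update count $k$ that controls the error terms.

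Next, on this same probability-one event, I would drive both error terms to zero. The regret term $\gamma \rho_{k-1} \to 0$ follows immediately from $k \to \infty$ together with the defining property $\lim_{k \to \infty} \rho_k = 0$ of a no-absolute-regret algorithm; this part is deterministic once $k \to \infty$ is secured. The sampling-error term $||\xi(k)||_\infty$ is the genuinely new piece: by Lemma~\ref{lem:bootstrap}, for each state $s$ and each successor $s'$ we have $\xi_{ss'}(k) \to 0$ with probability one, and since the state and action spaces are finite there are only finitely many such pairs, so intersecting finitely many probability-one events gives $||\xi(k)||_\infty \to 0$ with probability one. Combining the two, $\gamma \rho_{k-1} + ||\xi(k)||_\infty \to 0$ almost surely, whence $\lim_{t \to \infty} ||\overline{Q}_t - \mathcal{T}\overline{Q}_t||_\infty = 0$ almost surely, and Lemma~\ref{lem:approximateFP} delivers $\lim \overline{Q}_t = Q^*$ with probability one.

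The main obstacle is bookkeeping rather than any deep analytic difficulty: I must be careful that the indexing variable in Lemma~\ref{lem:approximateFP} is the iteration $t$ while the error bounds are naturally indexed by the per-state update count $k$, and that the almost-sure statements from Lemma~\ref{lem:bootstrap} hold on a common event across all finitely many state pairs simultaneously. The one subtlety worth double-checking is that $\overline{Q}_t$ is itself a well-defined sequence for every $t$ (not just at update times), which follows from the convention in Lemma~\ref{lem:regret-async} that unchosen states carry their previous average forward; thus the statement $\lim_{t \to \infty}\overline{Q}_t = Q^*$ is meaningful and it is exactly the intersection of the finitely many probability-one events above that makes the conclusion hold with probability one.
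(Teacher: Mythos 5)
Your proposal is correct and follows essentially the same route as the paper, which proves the theorem by combining Lemma~\ref{lem:contraction-async} with Lemma~\ref{lem:approximateFP} (noting the latter applies without change), with the error terms vanishing via the no-regret property and Lemma~\ref{lem:bootstrap}. Your write-up is in fact more careful than the paper's one-line proof, spelling out the almost-sure passage $k \to \infty$, the intersection over finitely many state pairs, and the pathwise application of Lemma~\ref{lem:approximateFP}.
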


 \subsection{Asynchronous updates with bandit feedback}
 \label{sec:async_bandit}

In RL, algorithms like Q-learning are usually assumed not to know $P$ and so only have access to feedback corresponding to the action actually taken in the current iteration.  In such settings, ordinary no-regret algorithms are not applicable because they require the counterfactual results from actions not chosen.  
However, multi-armed bandit algorithms, such as
Exp3~\citep{auer2002nonstochastic}, are designed to achieve no-regret guarantees {\em in expectation} despite only receiving feedback about the outcomes chosen.   It would be natural to adapt LONR to the on-policy RL setting by replacing the no-regret algorithm with a multi-armed bandit one.  This type of result has previously been obtained for normal-form games~\cite{banerjee2005efficient}, where agents can learn to play optimally even if they only learn their payoff at each stage and not what action the other agents took.

To adapt LONR to make use of multi-armed bandit algorithms, we can use the $Q$ update rule $Q_{t+1}(s,a) = 1/\pi_t(s,a) (r(s,a) + \gamma \mathbb{E}_{\pi_t}[Q_k(s',a')]$ if $a$ is the action chosen for state $s$ and $Q_{t+1}(s,a') = 0$ for $a' \neq a$.\footnote{The use of importance sampling here is to maintain the structure that successor states are evaluated as $\mathbb{E}_{\pi_t}[Q_k(s',a')]$.  Alternatively we could use the SARSA-style update $Q_{t+1}(s,a) = r(s,a) + \gamma Q_t(s',a')$ where $a'$ is the action that was chosen the last time $s'$ was updated and leave all other Q-values unchanged (this also requires appropriately adjusting the way the average is computed).}  The no-absolute-regret algorithm for bandit feedback at $s$ can then be updated as $x^s_t = r(s,a) + \gamma \mathbb{E}_{\pi_t}[Q_k(s',a')]$.  (We use the raw rather than importance sampling estimate here because, e.g. Exp3 already includes importance weighting.)  Unlike in Q-learning, we do not need to average over Q-values to account for the stochasticity in choice of $s'$ because our convergence results are already for the averages of our Q-values.

With these definitions, Lemma~\ref{lem:regret-async} can be immediately adapted to this setting with the caveat that now the guarantees only hold in expectation over the choice of action at each iteration and the resulting state.  Furthermore, since we require the state be chosen uniformly at random, the resulting algorithm is on-policy in the sense that the algorithm is choosing which action to receive feedback about, but does not control the sequence of states in which it acts.

\begin{lemma}
\label{lem:regret-bandit}
Let $s$ be the state selected uniformly at random and updated in iteration $t+1$, for which this is the $k$-th update and let $\overline{Q}_{t+1}(s,a) = 1/k \sum_{i = 1}^k Q_{t_i}(s,a)$ and $\overline{Q}_{t+1}(s',a) = \overline{Q}_{t}(s',a)$ for $s' \neq s$.
Then
\begin{align}
&\min_{s'} \gamma (-\xi_{ss'}(k)- \rho_{k'}) +  \mathcal{T}\overline{Q}_t(s,a) \label{eqn:upper-bandit}\\ &\leq \mathbb{E}[\overline{Q}_{t+1}(s,a)] \leq  \max_{s'} \gamma (-\xi_{ss'}(k) + \rho_{k'}) \mathcal{T}\overline{Q}_t(s,a).\notag 
\end{align}
\end{lemma}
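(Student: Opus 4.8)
The plan is to reduce this statement to Lemma~\ref{lem:regret-async} by showing that, in expectation over the bandit algorithm's action choices, the importance-weighted update behaves exactly like the full-information asynchronous update. The crucial observation is that the importance weighting in the update rule makes each per-round $Q$-value an unbiased estimator of the full-information update. Concretely, conditioning on the history through the point where $\pi_t(s,\cdot)$ and the successor values are determined, and letting $a^*$ denote the action drawn from $\pi_t(s,\cdot)$ at state $s$, we have
\begin{align*}
\mathbb{E}_{a^*}[Q_{t+1}(s,a)] &= \pi_t(s,a)\cdot\frac{1}{\pi_t(s,a)}\bigl(r(s,a) + \gamma\mathbb{E}_{\pi_t}[Q_t(s',a')]\bigr)\\ &= r(s,a) + \gamma\mathbb{E}_{\pi_t}[Q_t(s',a')],
\end{align*}
since the estimate is zero on the event $a^*\neq a$, which occurs with probability $1-\pi_t(s,a)$. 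This recovers precisely the deterministic update used in Lemma~\ref{lem:regret-async}.

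First I would apply this unbiasedness, together with linearity of expectation and the tower property, to the averaged quantity, obtaining $\mathbb{E}[\overline{Q}_{t+1}(s,a)] = \frac{1}{k}\sum_{i=1}^k \mathbb{E}[r(s,a) + \gamma\mathbb{E}_{\pi_{t_i}}[Q_{t_i}(s',a')]]$. From here the chain of (in)equalities in the proof of Lemma~\ref{lem:regret-async} can be reproduced inside an outer expectation: the term $r(s,a)$ is deterministic, the asynchronous bookkeeping correction $\xi_{ss'}(k)$ is a functional of which updates to $s'$ are observed by $s$ and is untouched by the action-selection randomness, and the operator term $\mathcal{T}\overline{Q}_t(s,a)$ appears exactly as before. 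The only step that must be re-justified is the regret inequality bounding the policy's realized value against the best fixed action in hindsight.

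The second step, then, is to replace the full-information no-regret inequality with its expectation-form analogue supplied by the bandit algorithm. A no-absolute-regret bandit algorithm such as Exp3 guarantees that the expectation, over its internal randomization, of the regret and of its absolute value is bounded by $\rho_{k'}$ when fed the Bellman-target rewards $x^{s'}_t = r(s',a) + \gamma\mathbb{E}_{\pi_t}[Q_t(s'',a'')]$ for the chosen action; because Exp3 performs its own importance weighting internally, feeding it these raw rewards is exactly what the guarantee requires (this is the point of the footnote). Invoking this bound in expectation in place of Equation~\eqref{eqn:regret} yields the same two-sided bound on $\mathbb{E}[\overline{Q}_{t+1}(s,a)]$ as in Equation~\eqref{eqn:upper-async}, with the expectation now also ranging over the sampled successor state.

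I expect the main obstacle to be the careful handling of the two interacting sources of randomness: the bandit regret guarantee is typically stated for a fixed-per-realization reward sequence, whereas here the rewards $x^{s'}_t$ are themselves random and adapted to the same filtration as the algorithm's choices, since they depend on previously realized downstream $Q$-values. The resolution is that the adversarial guarantee holds pathwise for every realization of the reward sequence---equivalently, against an adaptive adversary---so one may condition on the reward realization, apply the regret bound, and then take the outer expectation without the bound degrading. Making this conditioning precise, and confirming that the resulting inequality is genuinely in expectation over both the action chosen at each visited state and the sampled successor as the statement claims, is where the substantive work lies; the remaining algebraic manipulations are identical to those of Lemma~\ref{lem:regret-async}.
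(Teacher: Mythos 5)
Your proposal is correct and follows essentially the same route as the paper, which gives no separate proof but simply asserts that Lemma~\ref{lem:regret-async} ``can be immediately adapted'' once the importance-weighted update makes each $Q_{t_i}(s,a)$ an unbiased estimate of the full-information update and the bandit algorithm's no-(absolute-)regret guarantee is invoked in expectation. Your write-up merely makes explicit (via the tower property and the adaptive-adversary point) the details the paper leaves implicit.
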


The same analysis from the asynchronous full information case then yields the following theorem.


 \begin{theorem}
 If states are chosen for update uniformly at random, then $\lim_{k \rightarrow \infty} E[\overline{Q}_t] = Q^*$.
 \end{theorem}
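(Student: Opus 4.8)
The plan is to follow the argument used for the asynchronous full-information theorem verbatim, but with the sequence of deterministic matrices $\hat{Q}_t := \mathbb{E}[\overline{Q}_t]$ playing the role that $\overline{Q}_t$ played there. Since Lemma~\ref{lem:regret-bandit} is exactly the in-expectation analogue of Lemma~\ref{lem:regret-async}, the skeleton is: (i) convert the per-iteration bounds of Lemma~\ref{lem:regret-bandit} into an in-expectation version of the contraction estimate of Lemma~\ref{lem:contraction-async}, (ii) argue the error terms vanish, and (iii) invoke Lemma~\ref{lem:approximateFP}, which applies unchanged to the deterministic sequence $\hat{Q}_t$.

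First I would take the full expectation on both sides of~\eqref{eqn:upper-bandit}. The entries of $\overline{Q}_{t+1}$ that are not updated satisfy $\overline{Q}_{t+1}(s',a)=\overline{Q}_t(s',a)$, so taking expectations and ranging over all state--action pairs yields, with $k$ the minimum number of times any state has been selected by time $t$,
\[
\|\hat{Q}_t - \mathcal{T}\hat{Q}_t\|_\infty \;\le\; \gamma\rho_{k-1} + \gamma\,\|\mathbb{E}[\xi(k)]\|_\infty + \Delta_t,
\]
the in-expectation counterpart of Lemma~\ref{lem:contraction-async}, where $\Delta_t$ collects the discrepancy between $\mathbb{E}[\mathcal{T}\overline{Q}_t]$ and $\mathcal{T}\hat{Q}_t$ discussed below.

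Second I would show every term on the right vanishes. Because states are drawn uniformly at random, each is selected infinitely often and the minimum count $k\to\infty$ as $t\to\infty$ with probability $1$, so $\rho_{k-1}\to0$ by the no-absolute-regret property. For the sampling error, Lemma~\ref{lem:bootstrap} gives $\xi_{ss'}(k)\to0$ almost surely, and since Lemma~\ref{lem:range} bounds the $Q$-values uniformly, dominated convergence upgrades this to $\mathbb{E}[\xi(k)]\to0$. Thus, modulo $\Delta_t$, $\hat{Q}_t$ is an approximate fixed point of $\mathcal{T}$ whose error tends to $0$, and Lemma~\ref{lem:approximateFP} then delivers $\hat{Q}_t=\mathbb{E}[\overline{Q}_t]\to Q^*$.

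The main obstacle is the term $\Delta_t$, which arises because $\mathcal{T}$ is nonlinear through its $\max$: by convexity $\mathbb{E}[\mathcal{T}\overline{Q}_t]\ge\mathcal{T}\mathbb{E}[\overline{Q}_t]=\mathcal{T}\hat{Q}_t$, so expectation does not commute with $\mathcal{T}$ and one must control the Jensen gap $\mathbb{E}[\max_{a'}\overline{Q}_t(s',a')]-\max_{a'}\mathbb{E}[\overline{Q}_t(s',a')]$. This gap is bounded by the dispersion of the averaged $Q$-values about their means, which is where the importance-sampling estimator's variance enters; the reason it should vanish is that each $\overline{Q}_t(s',\cdot)$ is itself an average over $k\to\infty$ updates, damping the per-round variance, but making this rigorous---rather than appealing, as the text does, to ``the same analysis''---is the substantive step the plan must supply.
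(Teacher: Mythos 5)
Your proposal takes exactly the paper's approach: the paper's entire proof of this theorem is the single sentence that ``the same analysis from the asynchronous full information case then yields the following theorem,'' i.e., Lemma~\ref{lem:regret-bandit} is plugged into the contraction argument of Lemma~\ref{lem:contraction-async} and then Lemma~\ref{lem:approximateFP} is invoked, which is precisely your skeleton (i)--(iii). The one place you go beyond the paper is the Jensen-gap term $\Delta_t$: the paper never acknowledges that $\mathbb{E}[\mathcal{T}\overline{Q}_t]$ and $\mathcal{T}\mathbb{E}[\overline{Q}_t]$ differ (the expectation in Lemma~\ref{lem:regret-bandit} is conditional on the history, so iterating it requires commuting a full expectation past the $\max$ inside $\mathcal{T}$), and so the obstacle you flag as the substantive missing step is genuinely unresolved in the paper's own proof as well---closing it would require a concentration argument showing the dispersion of $\overline{Q}_t$ about its mean vanishes, which itself needs care because the importance-sampled updates need not satisfy the uniform bound of Lemma~\ref{lem:range}. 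In short, your write-up reproduces the paper's argument and is more candid than the paper about where that argument is incomplete.
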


This convergence of expectation implies that the $\overline{Q}_t$ converge in probability to $Q^*$, a weaker guarantee than the almost sure convergence of algorithms like Q-learning.  We leave deriving a stronger convergence guarantee with more natural assumptions about state selection to future work.

\section{Experiments}
\label{sec:experiments}

Our theoretical results in Sections~\ref{sec:LONR} and~\ref{sec:extensions} are restricted to (online) MDPs and normal form games and require a number of technical assumptions. The primary goal of this section is to provide evidence that relaxation of these restrictions may be possible.

Another goal of these results is that while the theory behind LONR calls for a regret minimizer with the no-absolute regret property, we seek to understand the performance of various well-known regret minimizers within the LONR framework, which may or may not be no-absolute regret.
One popular class of no-regret algorithms is Follow-the-Regularized Leader (FoReL) algorithms, of which Multiplicative Weights Update (MWU) is perhaps the best known. MWU works by determining a probability distribution over actions by normalizing weights assigned to each action, with the weights equal to the exponential sum of past rewards and a learning rate. It satisfies the stronger non-negative regret property and therefore the no-absolute regret property.
Another algorithm we consider is Optimistic Multiplicative Weights Update (OMWU), which extends MWU with optimism by making the slight adjustment of counting the last value twice each iteration, a change which guarantees not just that the average policy is no-regret, but that the last one (the {\em last iterate}) is as well~\citep{daskalakis2018last}.
We also consider Regret Matching~\citep{hart2000simple} (RM) algorithms, which are the most widely used regret minimizers in CFR-based algorithms due to their simplicity and, unlike FoReL, lack of parameters. With RM, the policy distribution for iteration $t+1$ is selected for actions proportional to the accumulated positive regrets over iterations 0 to $t$. Regret Matching+ (RM+) is a variation that resets negative accumulated regret sums after each iteration to zero, and applies a linear weighing term to the contributions to the average strategy~\citep{tammelin2014solving}. The current state of the art algorithm, Discounted CFR (DCFR), is a parameterized algorithm generalizing RM+ where the accumulated positive and negative regrets are weighed separately as well the weight assigned to the contribution to the average strategy~\cite{brown2019solving}. The parameters used are $\alpha$ = 3/2, $\beta$ = 0 and $\gamma$ = 2, which are the values recommended by the authors.  All of these variants of RM are known to not have last iterate convergence in general and to not satisfy the non-negative regret property. (We do not know whether they satisfy the no-absolute-regret property.)


In addition to these standard no-regret algorithms, we introduce a new variant of RM called Regret Matching++ (RM++), which updates in a similar fashion to Regret Matching but clips the \textit{instantaneous} regrets at 0. That is, if $R^t(a)$ is the regret of action $a$ in round $t$ RM tracks $\sum_t R^t(a)$ while RM++ tracks the upper bound $\sum_t \max(R^t(a),0)$.\footnote{The same idea of clipping instantaneous regrets at 0 has recently been used by actor-critic approaches~\cite{srinivasan2018actor}.} Unlike all other RM variants used, however,  RM++ is not a no-regret algorithm. 



Lastly, we present results for the first two versions of LONR we analyzed theoretically: value-iteration style (LONR-V) and with asynchronous updates (LONR-A). For LONR-A, while the theory requires states be chosen for update uniformly at random, we instead run it on policy. (We add a small probability of a random action, 0.1, to ensure adequate exploration.)  Our results show that empirically this does not prevent convergence.  

The settings we use for our results are chosen to demonstrate LONR in settings where neither CFR nor standard RL algorithms are applicable.  For CFR, this means we choose settings with  repeated states and possibly a lack of terminals. For RL, this means considering settings with multiple agents.  Since our exposition of LONR is for a single agent setting, we now explain how we apply it in multi-agent settings. We use centralized training, so each agent has access to the current policy of the other agent. This allows the agent to update with the expected rewards and transition probabilities induced by the current policy of the other agent. 


\subsection{NoSDE Markov Game}

Our primary setting is a stateful one with multiple agents.  Such settings are naturally modelled as Markov games, a generalization of MDPs to multi-agent settings.
A \textbf{Markov Game} \begin{math} \Gamma \end{math} is a tuple \begin{math}(S, N, \mathbf{A}, T, R, \gamma) \end{math} where  \begin{math} S \end{math} is the set of states,  \begin{math} N = \{1, ..., n\} \end{math} is the set of players, the set of all state-action pairs \begin{math}\mathbf{A} = \bigcup_{s \in S}(\mathbf{\{s\}} \times \prod_{n \in N} \mathit{A_{n, s}})\end{math}, a transition kernel \begin{math} T : \mathbf{A} \mapsto \Delta(S) \end{math}, and a discount factor \begin{math} \gamma\end{math}.

Because Markov Games can model a wide variety of games, algorithms designed for the entirety of this class must be robust to particularly troublesome subclasses.  One early negative result found that there exist general-sum Markov Games in which no stationary deterministic equilibria exist, which \citet{zinkevich2006cyclic} term NoSDE games. These games have the property that there exists a unique stationary equilibrium with (randomized) policies where the Q-values for each agent are identical in equilibrium but their equilibrium strategies are not. Furthermore, additional complexity exists as the rewards of each player in this NoSDE game can be adjusted within a certain closed interval, where the resulting Q-values remain the same, but the stationary policy changes, thus making Q-value learning even more problematic.


\begin{figure}[t]
    \centering
\begin{subfigure}[b]{0.4\columnwidth}
\scalebox{0.4}{%
\begin{tikzpicture}[->,>=stealth',shorten >=1pt,auto,node distance=2.8cm,semithick]
  \tikzstyle{every state}=[]

  \node[state, minimum size=2cm] (A)                    {\Huge{$1$}};
  \node[state, minimum size=2cm]         (B) [right = 50mm of A] {\Huge{$2$}};
    \path (B) edge [bend left]             node {\huge{$R_{1}(2, SEND) = 0$}} (A)
          (A) edge [loop above]            node {\huge{$R_{1}(1, KEEP) = 1$}} (A)
          (B) edge [loop above]            node {\huge{$R_{1}(2, KEEP) = 3$}} (B)
          (A) edge [bend left]             node {\huge{$R_{1}(1, SEND) = 0$}} (B);
\end{tikzpicture}
}
\centering
\caption{Rewards for Player 1    \label{fig:my_labelR1}}
\end{subfigure}
\hfill 
\begin{subfigure}[b]{0.4\columnwidth}
\scalebox{0.4}{%
\begin{tikzpicture}[->,>=stealth',shorten >=1pt,auto,node distance=2.8cm,semithick]
  \tikzstyle{every state}=[]

  \node[state, minimum size=2cm] (A)                    {\Huge{$1$}};
  \node[state, minimum size=2cm]         (B) [right = 50mm of A] {\Huge{$2$}};
    \path (B) edge [bend left]             node {\huge{$R_{2}(2, SEND) = 0$}} (A)
          (A) edge [loop above]            node {\huge{$R_{2}(1, KEEP) = 0$}} (A)
          (B) edge [loop above]            node {\huge{$R_{2}(2, KEEP) = 1$}} (B)
          (A) edge [bend left]             node {\huge{$R_{2}(1, SEND) = 3$}} (B);
\end{tikzpicture}
}
\centering
\caption{Rewards for Player 2    \label{fig:my_labelR2}}
\end{subfigure}
\caption{NoSDE Markov Game \vspace{-5mm}}
\end{figure}
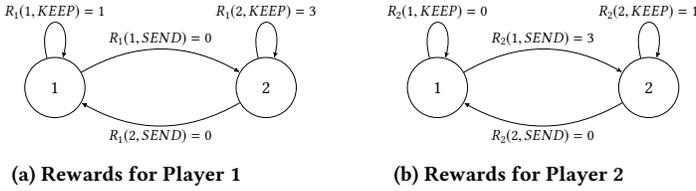

\begin{figure}[!ht]
   \centering
        \includegraphics[width = 0.99\columnwidth]{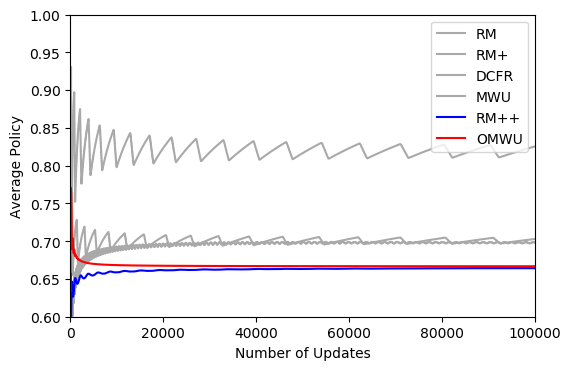}
        \caption{Average Policy for player1.  The two lowest lines are the first demonstration of convergence to stationary equilibrium in this setting.\vspace{-5mm}}
        \label{fig:V2Plots/LONR_V_NOSDE-SEND_ALL_AVG_1SEND}
\end{figure}

\begin{figure}[t]
    \centering
        \includegraphics[width = 0.99\columnwidth]{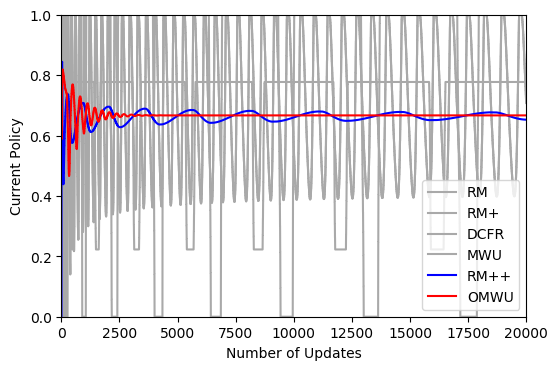}
        \caption{Last iterate for player1.  With RM++ or OMWU, LONR converges with the last iterate, not just on average.\vspace{-5mm}}
        \label{fig:V2Plots/LONR_V_NOSDE_ALL_SIX}
\end{figure}    

\begin{figure*}[htb]
    \centering
    \begin{subfigure}[b]{0.25\textwidth}
        \includegraphics[width = \textwidth]{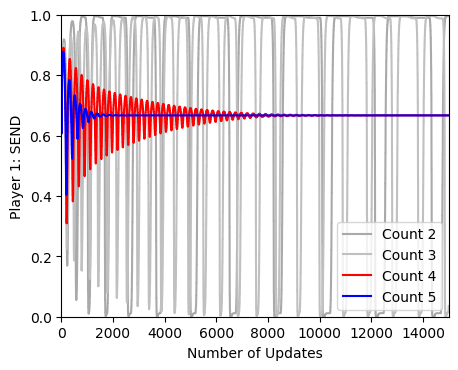}
        \caption{Impact of level of optimism on convergence}
        \label{fig:V2Plots/LONR_V_OMWU_COUNTS}
    \end{subfigure}
    \centering
    \begin{subfigure}[b]{0.25\textwidth}
        \includegraphics[width = \textwidth]{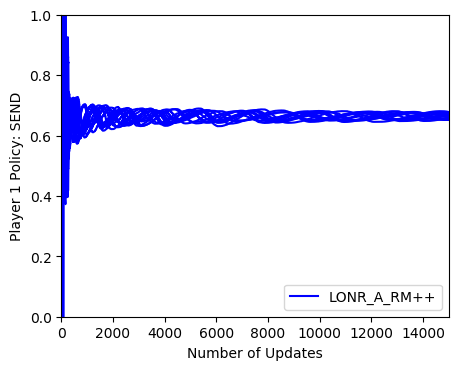}
        \caption{Multiple runs of LONR-A with RM++}
        \label{fig:V2Plots/LONR_A_NOSDE_RM++_SEND}
    \end{subfigure}
    \centering
    \begin{subfigure}[b]{0.25\textwidth}
        \includegraphics[width = \textwidth]{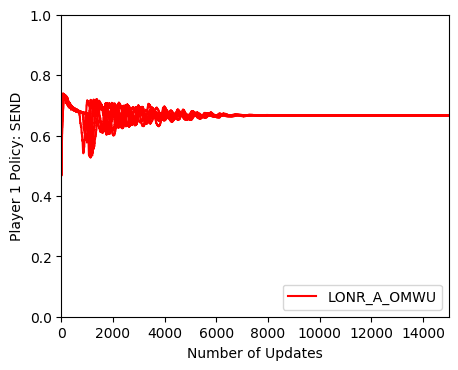}
        \caption{Multiple runs of LONR-A with OMWU}
        \label{fig:LONR_A_NOSDE_OMWU_SEND}
    \end{subfigure}
    \caption{Additional results for NoSDE Markov Game. (a) Optimism not only leads to convergence of the last iterate, but increasing optimism can affect convergence rates. (b and c) Asynchronous updates still converge using last iterate converging regret minimizers.\vspace{-5mm}}
\end{figure*}

The reward structure for the particular NoSDE game we use is shown in Figure~\ref{fig:my_labelR1} for Player 1 and Figure~\ref{fig:my_labelR2} for Player 2. Conceptually, a NoSDE game is a deterministic Markov Game with 2 players, 2 states, and each state has a single player with more than one action. The dynamics of a NoSDE game become cyclic as each player prefers to change actions when the other player does as well, which causes the non-stationarity.
In this instance, when player 1 sends, player 2 then prefers to send. This causes player 1 to prefer to keep, which in turn causes player 2 to prefer to keep. Player 1 then prefers to send and the cycle repeats.
Due to these negative results, Q-value learning algorithms cannot learn the stationary equilibrium.  The state of the art solution is still that of \citet{zinkevich2006cyclic} who give a multi-agent value iteration procedure which can approximate a cyclic (non-stationary) equilibrium.

No-regret algorithms are known to converge in self-play, but not necessarily to desirable points, e.g. Nash Equilibrium. This convergence guarantee is in the average policy. Our first results look at the average policies in the NoSDE game with LONR-V. Figure~\ref{fig:V2Plots/LONR_V_NOSDE-SEND_ALL_AVG_1SEND} show behavior of the average probability with which player 1 chooses to SEND.  The unique stationary equilibrium probability for this action is 2/3. Each algorithm shows convergence, but not to the same value.  Not shown but important is that each also is converging to the equilibrium $Q^*$ in the average Q values.

RM and MWU converge to a similar average policy (top two lines).  These two algorithms choose based on tracking the sum of regrets and rewards respectively.  RM+ and DCFR follow a similar path (next two lines), which makes sense given that RM+ is a special case of DCFR. RM++ and OMWU are the only two which find the stationary equilibrium policy (bottom two lines). These two are also the only two with last iterate convergence properties (OMWU provably and RM++ empirically). Figure~\ref{fig:V2Plots/LONR_V_NOSDE_ALL_SIX}, which plots the current iterate for each regret minimizer, shows that this holds in our NoSDE game as well.  RM++ and OMWU achieve last iterate convergence while for the other four cyclic behavior can be seen.\footnote{The figure shows last iterate convergence of the policy.  This also implies convergence of the value estimates.  See Appendix~\ref{sec:LIVE}.}
This result highlights NoSDE games as a setting where it would be interesting to theoretically study last iterate convergence in between simple normal form games~\cite{mertikopoulos2018cycles,bailey2018multiplicative} and rich, complex settings such as GANs~\citep{daskalakis2017training}.


While the theory behind OMWU states that the last value need only be counted twice, our results highlight the difference in the last iterate when more optimism is included (i.e. the last value is counted more than twice.) Specifically, in Figure~\ref{fig:V2Plots/LONR_V_OMWU_COUNTS} , we plot the last iterate for increasing counts of the last value. The figure indicates the role increased optimism plays in not only convergence versus divergence, but in how quickly convergence happens. In this case, despite the theory, counting twice does not lead to convergence in the last iterate, but 4 and above does. This simultaneously shows a negative and positive result: increased optimism is not known to work or be required in any other settings. Theoretically exploring this phenomenon is an interesting direction for future work.

Lastly, we analyze LONR-A, the asynchronous version of LONR. We restrict our results to the two which show last iterate convergence, RM++ (Figure~\ref{fig:V2Plots/LONR_A_NOSDE_RM++_SEND}) and OMWU (Figure~\ref{fig:LONR_A_NOSDE_OMWU_SEND}), plotting 100 runs of each.  They show that, despite a more natural process for choosing which state to update than our theory permits, we still see convergence.

\subsection{Additional Experiments}

Additional experiments which bridge the gap from MDPs to NoSDE Markov Games are presented in the Appendix. For a ``nicer'' Markov game than our deliberately challenging NoSDE game, we use the standard simple 2-player, zero-sum soccer game~\cite{littman1994markov}. With any of our six regret minimizers both LONR-V and LONR-A achieve approximate equilibrium payoffs on average.
For a setting to probe the assumptions of our theory in a setting closer to it, we run LONR on the typical benchmark GridWorld environment, an MDP.  Specifically we use the standard cliff-walking task which requires the agent to avoid a high-cost cliff to reach the exit terminal state.  Again, LONR-V and LONR-A learn the optimal policy (and optimal Q-values) despite regret minimizers that may not satisfy the no-absolute-regret property and, in the case of LONR-A, on policy state selection.


\begin{figure}[t]
    \centering
        \includegraphics[width = \columnwidth]{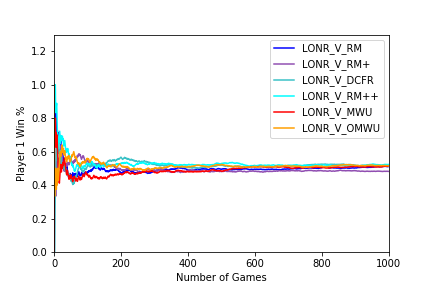}
        \caption{2-player, zero sum soccer game ~\cite{littman1994markov}: All tested no-regret algorithms combined with LONR reach equilibrium between opposing players in self-play.}
        \label{fig:V2Plots/LONR_V_SOCCER}
\end{figure}


\section{Conclusion}

We have proposed a new learning algorithm, local no-regret learning (LONR). We have shown its convergence for the basic case of MDPs (and limited extensions of them) and presented empirical results showing that it achieves convergence, and in some cases last iterate convergence, in a number of settings, most notably NoSDE games.  We view this as a proof-of-concept for achieving CFR-style results without requiring perfect recall or terminal states.  

Our results point to a number of interesting directions for future research.  First, a natural goal given our empirical results would be to extend our convergence results to Markov games.  Second, CFR also works in settings with partial observability by appropriately weighting the different states which correspond to the same observed history.  Third, we would like to relax the strong assumptions our results about asynchronous updates require.  All three  seem to rely on the same fundamental building block of better understanding the behavior of no-regret learners whose rewards are determined by (asynchronous) observations of other no-regret learners.
In particular, this leads to challenges due to the resulting non-stationarity of the transition kernels, which leads to hardness results that would need to be circumvented~\cite{yu2009online,yadkori2013online,radanovic2019learning}.
Some recent progress along these lines has been made~\cite{farina2018composability,kovavrik2018analysis}, but more work is needed.

Orthogonal directions are suggested by our empirical results about last iterate convergence.  Can we establish theoretical guarantees for NoSDEs or Markov games more broadly? Are there assumptions under which RM++ is no-regret, or is guaranteed to achieve last iterate convergence?  It empirically does in standard games like matching pennies and rock-paper-scissors which trip up most regret minimizers. If so does this represent a new style of algorithm to achieve last iterate convergence or is there a way to interpret its clipping of regrets as optimism?

\appendix

\section{Beyond MDPs}
\label{sec:beyond}

If we move beyond MDPs, $P$ and $r$ are no longer stationary and in general we have a $P_k$ and $R_k$.  This causes problems with the proof of Lemma~\ref{lem:regret}.  Recall the initial part of that proof, updated to this more general setting:
\begin{align*}
\overline{Q}_k(s,a)
&=  \frac{1}{k} \sum_{t = 1}^k Q_{t}(s,a)\\
&=  \frac{1}{k} \sum_{t = 0}^{k-1} r_{t}(s,a) + \gamma\mathbb{E}_{P_{t},\pi_{t}}[ Q_{t}(s',a')]\\
\end{align*}
In the original proof, we pulled the expectation over $P$ outside the sum, but now we cannot.  In particular, writing the expectation more explicitly gives
\begin{equation}
\frac{1}{k} \sum_{t = 0}^{k-1} r_{t}(s,a) + \gamma \sum_{s' \in \mathcal{S}} P_{t}(s' ~|~ s,a)\mathbb{E}_{\pi_{t}}[ Q_{t}(s',a')]
\end{equation}
We can still reverse the order of the sums, but the weighting terms now depend on $t$ so they cannot be moved outside.  More problematically, they also depend on $s$ and $a$, so it is not immediately clear how to generalize our results.
For intuition,
consider a state $s'$ where there are two actions.  At odd $k$, $r_k(s',a_1) = 1$ and $r_k(s',a_2) = 0$ and vice versa at even $k$.  It is a valid no-regret strategy to randomize uniformly over the actions, but if the $P_k$ are such that you only arrive in $s'$ from $s$ at odd $k$, then this gives an incorrect estimate.
In the remainder of this section, we analyze a special case where we can prove a variant of Lemma~\ref{lem:regret}.

\subsection{Time-invariant $P$}

If $P$ does not change with $k$, but $r$ does, we can still prove a version of Lemma~\ref{lem:regret}.  With a single state, this captures learning in normal-form games, where no-regret learning is indeed known to work.  This assumption is also common in the literature on ``online MDPs''~\citep{even2009online,mannor2003empirical,yu2009markov,ma2015online}
In this setting, a version of Lemma~\ref{lem:regret} can be proved, but now rather than having a constant operator $\mathcal{T}$ it now changes over time as
\begin{equation}
 \mathcal{T}_k Q(s,a) = \underbar{r}_k(s,a) + \gamma\mathbb{E}_P [\max_i Q(s',a_i)].
\end{equation}


\begin{lemma}
\begin{equation}
- \gamma \rho_{k-1}+  \mathcal{T}_k\underline{Q}_k(s,a) \leq\overline{Q}_k(s,a) \leq  \gamma \rho_{k-1}+  \mathcal{T}_k\underline{Q}_k(s,a).
\end{equation}
\end{lemma}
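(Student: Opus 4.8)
The plan is to follow the proof of Lemma~\ref{lem:regret} essentially verbatim, isolating the single place where the constancy of $P$ was used and checking that it is exactly the place the time-invariance assumption now rescues. The natural reading of $\underbar{r}_k$ is the running average reward $\underbar{r}_k(s,a) = \frac{1}{k}\sum_{t=0}^{k-1} r_t(s,a)$, and the whole point of defining $\mathcal{T}_k$ with this averaged reward is that the sum of the time-varying rewards collapses into it; the continuation term, by contrast, will behave exactly as before precisely because $P$ is unchanged.

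First I would expand $\overline{Q}_k(s,a) = \frac{1}{k}\sum_{t=1}^k Q_t(s,a)$ using the LONR update $Q_{t+1}(s,a) = r_t(s,a) + \gamma\,\mathbb{E}_{P,\pi_t}[Q_t(s',a')]$ and re-index to a sum over $t = 0,\ldots,k-1$, giving $\frac{1}{k}\sum_{t=0}^{k-1} r_t(s,a) + \gamma\,\mathbb{E}_{P,\pi_t}[Q_t(s',a')]$. The first part collapses to $\underbar{r}_k(s,a)$ by definition. For the second part I would write the expectation over $P$ explicitly as a sum over successors $s'$ weighted by $P(s' \mid s,a)$; because these weights no longer carry a $t$ index, I can pull $\mathbb{E}_P$ outside the average over $t$, leaving $\gamma\,\mathbb{E}_P[\frac{1}{k}\sum_{t=0}^{k-1}\mathbb{E}_{\pi_t}[Q_t(s',a')]]$. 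At this point the inner quantity is precisely the average realized value of the copy of the no-regret algorithm at $s'$, which over rounds $0,\ldots,k-1$ saw the reward vectors $Q_t(s',\cdot)$ and played $\pi_t(s')$. Applying the no-regret bound~\eqref{eqn:regret} inside the expectation lower-bounds this by $\max_i \frac{1}{k}\sum_{t=0}^{k-1} Q_t(s',a_i) - \rho_{k-1} = \max_i \underline{Q}_k(s',a_i) - \rho_{k-1}$, and reassembling yields $\overline{Q}_k(s,a) \geq -\gamma\rho_{k-1} + \mathcal{T}_k\underline{Q}_k(s,a)$. The matching upper bound follows identically by invoking the no-absolute-regret property to reverse the regret inequality.

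The step the preceding discussion flags as the genuine obstacle — pulling $\mathbb{E}_P$ through the average over $t$ — is exactly the one that the time-invariant-$P$ assumption makes trivial, so in this special case there is no real difficulty beyond bookkeeping: the $t$-dependence of the transition weights is what breaks the general argument, and removing it restores the original proof line for line. The only points to verify with care are that the reward averaging is absorbed cleanly into $\mathcal{T}_k$ (which is why $\mathcal{T}_k$ is defined with $\underbar{r}_k$ rather than a per-round reward) and that, in the synchronous setting, the copy at $s'$ really does observe exactly the first $k$ reward vectors $Q_0(s',\cdot),\ldots,Q_{k-1}(s',\cdot)$ in order, so that~\eqref{eqn:regret} applies without the re-weighting complications that arise in the asynchronous analysis.
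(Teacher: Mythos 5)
Your proposal is correct and matches the paper's own proof essentially line for line: the same expansion of $\overline{Q}_k$ via the LONR update, the same pulling of $\mathbb{E}_P$ outside the time average (which is exactly where time-invariance of $P$ is used), the same application of the no-regret bound at $s'$ with the averaged reward absorbed into $\underbar{r}_k$ and hence $\mathcal{T}_k$, and the same appeal to no-absolute-regret for the reverse inequality. Your reading of $\underbar{r}_k$ as the running average $\frac{1}{k}\sum_{t=0}^{k-1} r_t(s,a)$ is also exactly what the paper's derivation uses.
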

\begin{proof}
\begin{align*}
&\overline{Q}_k(s,a)\\
&=  \frac{1}{k} \sum_{t = 0}^{k-1} r_{t}(s,a) +\gamma\mathbb{E}_P [\frac{1}{k} \sum_{t = 0}^{k-1}\mathbb{E}_{\pi_{t}} [Q_{t}(s',a')]]\\
&\geq \frac{1}{k} \sum_{t = 0}^{k-1} r_{t}(s,a)+ \gamma \mathbb{E}_P [\max_i \frac{1}{k} \sum_{t = 0}^{k-1}Q_{t}(s',a_i) - \rho_{k-1}]\\
&= - \gamma \rho_{k-1} + \underbar{r}_k(s,a) + \gamma\mathbb{E}_P [\max_i \underline{Q}_k(s',a_i)]\\
&= - \gamma \rho_{k-1}+  \mathcal{T}_k\underline{Q}_k(s,a)\\
\end{align*}
As before, the key step is applying the no-regret property to obtain the inequality and we apply the same argument with the no-absolute-regret property to obtain the reverse inequality.
\end{proof}


\section{Omitted Proofs}
\label{sec:om-proofs}

\newtheorem*{lem.range}{Lemma~\ref{lem:range}}
\begin{lem.range}
Let $||r||_\infty = \max_{s,a} |r(s,a)|$.  Then $||Q_k - Q_0||_\infty \leq 1 / (1 - \gamma) ||r||_\infty + 2 ||Q_0||_\infty$
\end{lem.range}

\begin{proof}
By definition, $Q_{k}(s,a) = r(s,a) + \gamma\mathbb{E}_{P,\pi_k} [Q_{k-1}(s',a')]$.
Thus by the subadditive property of norms, $||Q_{k}||_\infty \leq ||r||_\infty + \gamma ||Q_{k-1}||_\infty$.
By induction, $||Q_{k}||_\infty \leq (\sum_{t = 0}^{k-1} \gamma^k)||r||_\infty + \gamma^k ||Q_0||_\infty$.
Thus $||Q_k - Q_0||_\infty \leq ||Q_k||_\infty + ||Q_0||_\infty \leq 1 / (1 - \gamma) ||r||_\infty + 2 ||Q_0||_\infty$.
\end{proof}

\newtheorem*{lem.contraction}{Lemma~\ref{lem:contraction}}
\begin{lem.contraction}
$||\underline{Q}_k - \mathcal{T}\underline{Q}_k||_\infty \leq \frac{1}{k}(1 / (1 - \gamma) ||r||_\infty + 2 ||Q_0||_\infty) + \gamma \rho_{k-1}$
\end{lem.contraction}

\begin{proof}
\begin{align*}
&||\underline{Q}_k - \mathcal{T}\underline{Q}_k||_\infty\\
&\leq ||\underline{Q}_k - \overline{Q}_k||_\infty + ||\overline{Q}_k - \mathcal{T}\underline{Q}_k||_\infty\\
&= ||\underline{Q}_k - \overline{Q}_k||_\infty+ \max_{s,a}|\overline{Q}_k(s,a) - \mathcal{T}\underline{Q}_k(s,a)|\\
&\leq ||\underline{Q}_k - \overline{Q}_k||_\infty + \gamma \rho_{k-1}\\
&= \frac{1}{k}||Q_k - Q_0||_\infty + \gamma \rho_{k-1}\\
&\leq \frac{1}{k}(1 / (1 - \gamma) ||r||_\infty + 2 ||Q_0||_\infty) + \gamma \rho_{k-1}\\
\end{align*}
The first step follows by the subadditive property of norms, the second by definition, the third by Lemma~\ref{lem:regret}, the fourth by definition, and the fifth by Lemma~\ref{lem:range}.
\end{proof}

\newtheorem*{lem.approximateFP}{Lemma~\ref{lem:approximateFP}}
\begin{lem.approximateFP}
Let $Q_0,Q_1,\ldots$ be a sequence such that $\lim_{k \rightarrow \infty} ||Q_k - \mathcal{T} Q_k||_\infty = 0$.  Then $\lim_{k \rightarrow \infty} Q_k = Q^*$.
\end{lem.approximateFP}

\begin{proof}
\begin{align*}
||Q_k - Q^*||_\infty
&\leq ||Q_k - \mathcal{T}Q_k||_\infty + ||\mathcal{T}Q_k - Q^*||_\infty\\
&= ||Q_k - \mathcal{T}Q_k||_\infty + ||\mathcal{T}Q_k -\mathcal{T} Q^*||_\infty\\
&\leq ||Q_k - \mathcal{T}Q_k||_\infty + \gamma||Q_k -Q^*||_\infty
\end{align*}
The first step follows by the subadditive property of norms, the second by optimality of $Q^*$, the third because $\mathcal{T}$ is a contraction map.  Rewriting yields
$$||Q_k - Q^*||_\infty \leq \frac{1}{1-\gamma}||Q_k - \mathcal{T}Q_k||_\infty$$

Thus, by assumption, $\limsup_{k \rightarrow \infty}||Q_k - Q^*||_\infty \leq 0$.  Since   $||Q_k - Q^*||_\infty \geq 0$, $\liminf_{k \rightarrow \infty}||Q_k - Q^*||_\infty \geq 0$.  Thus $\lim_{k \rightarrow \infty}||Q_k - Q^*||_\infty = 0$ and the result follows.
\end{proof}

\newtheorem*{lem.regret-async}{Lemma~\ref{lem:regret-async}}
\begin{lem.regret-async}
Let $s$ be the state selected uniformly at random and updated in iteration $t+1$, for which this is the $k$-th update and let $\overline{Q}_{t+1}(s,a) = 1/k \sum_{i = 1}^k Q_{t_i}(s,a)$ and $\overline{Q}_{t+1}(s',a) = \overline{Q}_{t}(s',a)$ for $s' \neq s$.
Then
\begin{align*}
&\min_{s'} \gamma (-\xi_{ss'}(k)- \rho_{k'}) +  \mathcal{T}\overline{Q}_t(s,a)\\ &\leq\overline{Q}_{t+1}(s,a) \leq  \max_{s'} \gamma (-\xi_{ss'}(k) + \rho_{k'}) +  \mathcal{T}\overline{Q}_t(s,a).
\end{align*}
\end{lem.regret-async}

\begin{proof}
By the definitions of LONR and no-regret algorithms,
\begin{align*}
&\overline{Q}_{t+1}(s,a)\\
&=  \frac{1}{k} \sum_{i = 1}^k Q_{t_i}(s,a)\\
&=  \frac{1}{k} \sum_{i = 1}^k r(s,a) + \gamma\mathbb{E}_{P,\pi_{t_i}}[ Q_{t_i}(s',a')]\\
&=  r(s,a) +\gamma\mathbb{E}_P [\frac{1}{k} \sum_{i = 1}^{k}\mathbb{E}_{\pi_{t_i}} [Q_{t_i}(s',a')]]\\
&=  r(s,a) +\gamma\mathbb{E}_P [-\xi_{ss'}(k) + \frac{1}{k'} \sum_{i = 1}^{k'}\mathbb{E}_{\pi_{\tau_i}} [Q_{\tau_i}(s',a')]]\\
&\geq  r(s,a) +\gamma\mathbb{E}_P [-\xi_{ss'}(k) + \max_{a'}\frac{1}{k'} \sum_{i = 1}^{k'}Q_{\tau_i}(s',a') - \rho_{k'}]\\
&\geq  \min_{s'} \gamma (-\xi_{ss'}(k)- \rho_{k'}) + r(s,a) +\gamma\mathbb{E}_P [\max_{a'}\frac{1}{k'} \sum_{i = 1}^{k'}Q_{\tau_i}(s',a')]\\
&=  \min_{s'} \gamma (-\xi_{ss'}(k)- \rho_{k'}) + r(s,a) +\gamma\mathbb{E}_P [\max_{a'}\overline{Q}_{t}(s',a')]\\
&=  \min_{s'} \gamma (-\xi_{ss'}(k)- \rho_{k'}) \mathcal{T}\overline{Q}_{t}(s,a')]\\
\end{align*}

This argument is essentially the same as in the proof of Lemma~\ref{lem:regret}, except that in the fourth equality we apply the definition of $\xi$ to yield a form to which we can then apply the no-regret property.  As before, the other half of the proof is symmetric and uses the no-absolute-regret property.
\end{proof}

\section{Empirical results for RM++}

Figure~\ref{fig:V2Plots/RPS_RM++} shows that the last iterate of RM++ converges to the equilibrium of rock-paper scissors.  Similar results, not shown, hold for matching pennies.  Prior work has shown that both RM and RM+ diverge in these games in terms of the last iterate (although they converge on average).  We also tested RM++ in Soccer and in Grid World. In both cases we achieved last iterate convergence.

\begin{figure}[t]
    \centering
        \includegraphics[width = 0.3\textwidth]{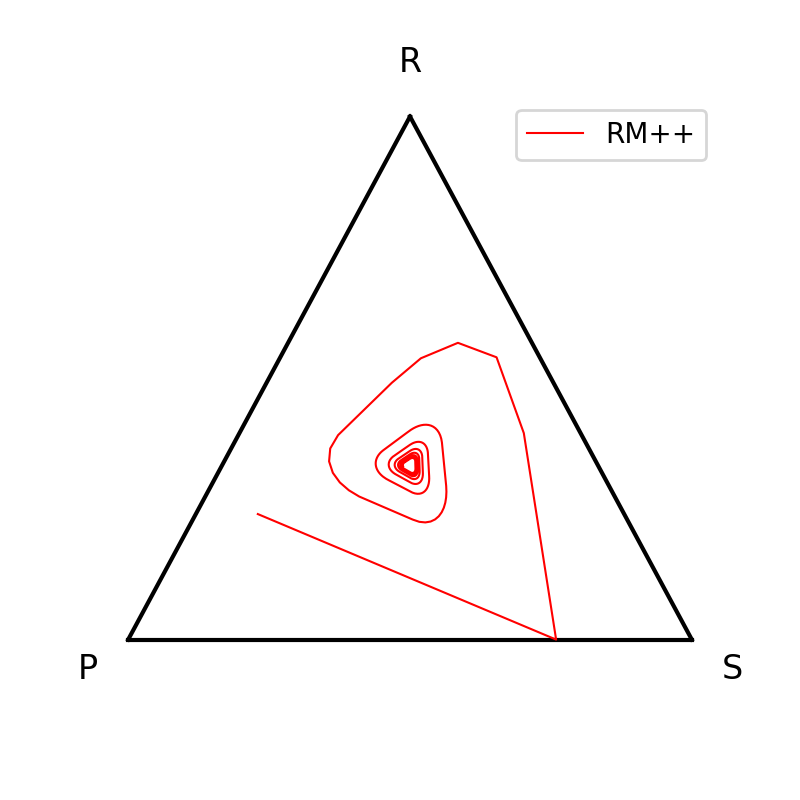}
    \caption{LONR-V policy with RM++ for Rock-Paper-Scissors       \label{fig:V2Plots/RPS_RM++} \vspace{-5mm}}
\end{figure}

\section{Last Iterate Convergence of Value Estimates} \label{sec:LIVE}

Our empirical convergence results show last iterate convergence for policies.  However, our theoretical results were about the convergence of the Q-value estimates.  At first glance this may appear an oversight, but a simple argument shows that last iterate convergence of the policies implies last iterate convergence of the Q-value estimates.  In particular, last iterate converenge means that $|\pi_k \cdot \underline{Q}_k(s) - \max_a \underline{Q}_k(s,a)| \leq \rho_k$, with $\lim_{k \rightarrow \infty} \rho_k = 0$.  By Theorem~\ref{thm:main}, $\lim_{k \rightarrow \infty} \underline{Q}_k = Q^*$.  Combining these shows that the $\pi_k$ are converging to $\pi^*$, which implies convergence of the Q-values.


\section{Experimental Details} 
\label{sec:om-experiments}

\subsection{LONR pseudocode}

In the following pseudocode, $N$ is the total number of agents, $n$ is current agent, and $S$ and  $s$ represent the total states and current state respectively. $A_n(s)$ denotes the set of actions for player $n$ in state $s$. $A_{-n}(s)$ denotes the set of actions of all other agents excluding agent $n$ in state $s$. $a$ refers to action of the current agent $n$ when unspecified. 
The policy update uses any no-regret algorithm. The update for Regret Matching++ is shown here.

\begin{algorithm}
\begin{algorithmic}[1]
\caption{LONR and Updates}\label{alg:lonr}

\Procedure{LONR-V}{$T, N, S, A_{n}$}\Comment{Value iteration}

    \State $\forall$ $n \in N, s \in S$, $a_{n} \in A_{n}(s)$ : 
    
    \State \quad $Q_{0}(n, s,a_{n}) \gets 0$, $\pi_{0}(n, s,a_{n}) \gets 0$ 
    \State $\quad$ RegretSums($n, s, a_{n}$) $\gets 0$, PolicySums($n, s, a_{n}$) $\gets 0$


    \State

    \For{$t$ from 0 to $T$}
        \State $\forall n \in N, s \in S$ : \State \quad Q-Update($n, s, t$)
        
        \State 
        \State $\forall n \in N, s \in S$ :
        \State \quad Policy-Update($n, s, t$)
        
        
    \EndFor
    \State

\EndProcedure

\Procedure{Q-Update}{$n, s, t$}\Comment{Update Q-Values}
        
        \For {each action $a_{n} \in A_{n}(s)$}
            \State $successors = getSuccessorStatesAndTransitionProbs(n, s, a_{n}, a_{-n})$
            \State $ActionValue \gets 0$
                
            \For {$s'$, $transProb$, $reward$ in $successors$}
            
                \State $nextStateValue \gets \sum_{a_{n}'} $
                $Q_{t}(n, s', a_{n}') \times \pi_{t}(n, s', a_{n}')$
            
                \State $ActionValue \gets ActionValue + transProb \cdot (reward + \gamma \cdot nextStateValue)$
                \EndFor
            \State $Q_{t+1}(n, s, a_{n}) \gets ActionValue$
        \EndFor
\EndProcedure


\State
\Procedure{Policy-Update}{$n, s, t$}\Comment{Regret Matching++}

        \State $ExpectedValue = \sum_{a_{n}} Q_{t+1}(n, s, a_{n}) \times \pi_{t}(n, s, a_{n})$

        
        \State
        
        \For{$a_{n} \in A_{n}(s)$}\Comment{RM++ Update Rule}
            \State $immediateRegret \gets max(0,  Q_{t+1}(n, s, a_{n}) - ExpectedValue$)
            \State RegretSums($n, s, a_{n}$) $\gets$ RegretSums($n, s, a_{n}$) + $immediateRegret$
        \EndFor
        \State
        
        \State $totalRegretSum = \sum_{i} $RegretSums($n, s, i$) 
        
        \State
        
        \For{$a_{n} \in A_{n}(s)$}\Comment{Update Policy}
            
            \If{$totalRegretSum > 0$}
                \State $\pi_{t+1}(n, s, a_{n}) = \frac{RegretSums(n, s, a_{n})}{totalRegretSum}$
            \Else
                \State $\pi_{t+1}(n, s, a_{n}) = \frac{1}{|A_{n}(s)|}$
            \EndIf
            \State
            \State PolicySums($n, s, a_{n}$) $\gets$ PolicySums($n, s, a_{n}$) $+ \pi_{t+1}(n, s, a_{n})$
            
        \EndFor
        
\EndProcedure

\end{algorithmic}
\end{algorithm}

\FloatBarrier

\subsection{LONR on GridWorld}

This task is a simple deterministic grid world MDP, in particular the cliff walking task used by \citep{van2009theoretical}, illustrated in Figure~\ref{fig:tasks-grid}.  As moves have a living cost of 1, we use $\gamma = 1$ (The optimal value from S is therefore -13.)  Because of the possibility of revisiting states, and receiving rewards/costs from non-terminals, CFR is not immediately applicable. Figure~\ref{fig:tasks-nosde} and Figure~\ref{fig:task-tiger} shows the results for both LONR-V and LONR-A in terms of the Q-value for the start state's optimal action of North.  LONR-V is deterministic with a single agent, so the plot represents a single run. For LONR-A we plot the results of 100 runs and their average. In both cases, we see convergence.

\begin{figure}[h]
    \centering
    \begin{subfigure}[b]{0.49\textwidth}
        \includegraphics[width = \textwidth]{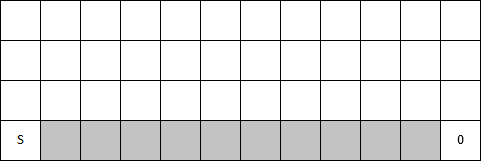}
        \caption{GridWorld: the agent's goal is to move from S to the 0 terminal.  Each move has a cost of 1 and the shaded states are terminals with a -100 payoff.  This particular grid world is the cliff walking task use by \citet{van2009theoretical} to evaluate Expected SARSA.}
        \label{fig:tasks-grid}
    \end{subfigure}
    \begin{subfigure}[b]{0.49\textwidth}
        \includegraphics[width = \textwidth]{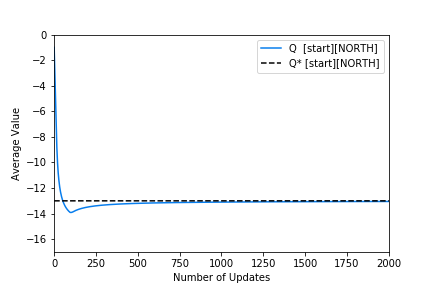}
        \caption{LONR-V on GridWorld}
        \label{fig:tasks-nosde}
    \end{subfigure}
    \begin{subfigure}[b]{.49\textwidth}
        \includegraphics[width = \textwidth]{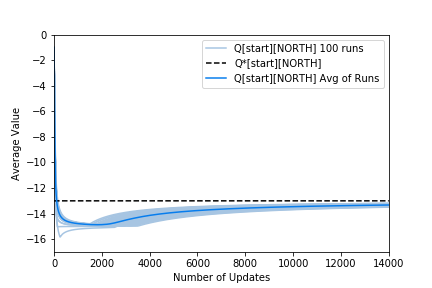}
        \caption{LONR-A on GridWorld}
        \label{fig:task-tiger}
    \end{subfigure}
    \caption{Tasks}
    \label{fig:tasks}
\end{figure}

\subsection{Soccer Game}

Here we analyze a simplified version of soccer, originally introduced in ~\citep{littman1994markov}, and subsequently widely used as an early benchmark game for multiagent Markov Games. Our implementation differs slightly in size, but maintains the general rules of the original game.

The soccer game is a two player, grid-style version based on real-life soccer. The size of the grid (field) is 2x4, where the first and last columns are the areas where each player can score. The 2x2 grid between the goal zones are cells in which the players can move. The game begins with each player set to a position on the grid, where one player has control of the ball. At each step of the game, the players each take an action, which are then executed jointly. The defending player is capable of stealing the ball by landing in the same cell as the player with the ball. If the player controlling the ball enters either of the goal cells, they receive 100 points and the other player receives -100 points, thus the game is zero-sum. For additional complexity, the order in which the actions are processed each iteration is randomized.

We run 2 agents against each other with LONR with each regret minimizer for 1000 games (a game runs until a player scores). We discount with $\gamma = 0.9$ to induce agents to score quickly. Before each new game, the position of the players (restricted to non-goal areas) is randomized, as well as who has initial control of the ball. The players then play 1000 games (with initial conditions again randomized) against each other using their learned policies (in this case, the average policy after training.) Figure~\ref{fig:V2Plots/LONR_V_SOCCER} shows the results of the trials. Each regret minimizer shows signs of convergence in self-play, as indicated by neither player dominating the other (each ends in the average as ties.)

\begin{figure}[t]
    \centering
        \includegraphics[width = \columnwidth]{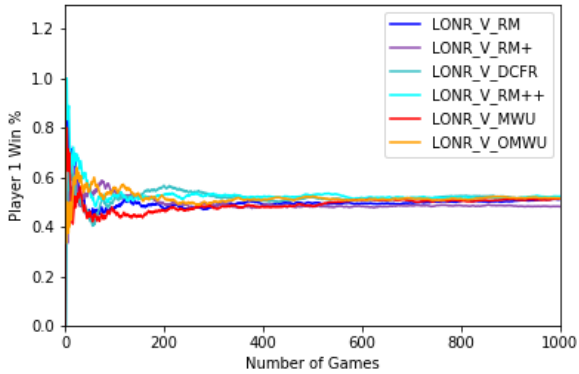}
        \caption{Soccer Game}
        \label{fig:V2Plots/LONR_V_SOCCER}
\end{figure}

  \bibliographystyle{ACM-Reference-Format}
  \bibliography{noregretq}
\end{document}